\DeclareMathOperator*{\argmax}{argmax}
\def\bs{\boldsymbol}
\def\P{\mathbb P}
\def\R{\mathbb R}
\def\1{\mathbf{1}}
\def\E{\mathbb{E}}
\def\e{\epsilon}
\def\a{\alpha}
\def\bS{\mathbb{S}}
\def\cA{\mathcal{A}}
\def\pa{\partial}
\begin{document}

\title{Prediction against a limited adversary}

\author{\name Erhan Bayraktar \email erhan@umich.edu \\
       \addr Department of Mathematics\\
       University of Michigan\\
       Ann Arbor, MI 48109-1043, USA
       \AND
       \name Ibrahim Ekren \email iekren@fsu.edu \\
       \addr Department of Mathematics\\
       Florida State Univeristy\\
       Tallahassee, FL 32306-4510, USA
       \AND
       \name Xin Zhang \email zxmars@umich.edu \\
       \addr Department of Mathematics\\
       University of Michigan\\
       Ann Arbor, MI 48109-1043, USA}

\editor{}

\maketitle

\begin{abstract}
We study the problem of prediction with expert advice with adversarial corruption where the adversary can at most corrupt one expert. Using tools from viscosity theory, we characterize the long-time behavior of the value function of the game between the forecaster and the adversary. We provide lower and upper bounds for the growth rate of regret without relying on a comparison result. We show that depending on the description of regret, the limiting behavior of the game can significantly differ. 
\end{abstract}

\begin{keywords}
  machine learning, expert advice framework, asymptotic expansion, discontinuous viscosity solutions
\end{keywords}

\section{Introduction}

Prediction with expert advice is one of the fundamental problems in \emph{online learning} and \emph{sequential decision making}. In this problem, at each round a forecaster chooses between alternative actions based on his current and past observations with the objective of performing as well as the best constant strategy. We refer the reader to \cite{cesa2006prediction} for a survey. This problem is often studied in the adversarial setting where an adversary chooses the outcomes to maximize the regret of the forecaster. This interaction between the forecaster and the adversary can be seen as a zero-sum game (see e.g. \cite{NIPS2017_6896,DBLP:conf/colt/AbernethyABR09,MR4120922,2019arXiv190202368B,2020arXiv200800052D,MR4053484,10.5555/2884435.2884474,NIPS2012_4638}).
Using the minimax theorem, one can easily show that this zero-sum game admits a value under mild assumptions and the value function satisfies a discrete time dynamic programming principle. Then, the long-time behavior of the value function can be studied by showing that the discrete time dynamic programming equation ``converges" to a differential operator and a scaled version of the value function converges to the solution of a partial differential equation associated to the differential operator. Viscosity solution theory provides formidable tools to rigorously show this convergence and study the properties of the long-time behavior of the value function. 

One can also state the prediction problem in the stochastic setting where the actions of the adversary are drawn from a fixed distribution (unknown or known to the forecaster). Since the decisions of the adversary do not depend on the state, the forecaster has better performances and his regret is smaller. 

Similar to \cite{2020arXiv200210286A,2020arXiv200308457B,pmlr-v99-gupta19a,MR3918712,10.1145/3188745.3188918}, in this paper, we bridge the adversarial and stochastic settings by considering an adversary who cannot freely choose the outcomes. In our framework, the gains of the experts are drawn from a fixed distribution. Then, without seeing the outcomes and each other's decisions, the adversary chooses to corrupt the gain of one of the experts and the forecaster chooses one of the experts. If the forecaster chose the corrupted expert,  he obtains the corrupted gain. Otherwise he obtains the gain of the expert he chose. 
By studying the value function of this game between the adversary and the forecaster, we show that several important features of the fully adversarial setting do not extend to our framework and the assumptions on the data of the problem can lead to dramatic differences for the long-time behavior of regret.

First of all, we show that the existence of the value for the zero-sum game in the pre-limit regime is not guaranteed. Indeed, if one does not state the problem of the adversary properly, the strategies of the adversary might fail to range in a convex set. This point has crucial implications. Indeed, the minimax theorem fails and one cannot establish a dynamic programming equation and the analysis of the interaction becomes significantly more challenging. In our work, we identify a relevant set of strategies for the adversary that allows us to obtain the existence of the value and to use the viscosity machinery.

The second contribution of our paper is to exhibit wildly different behavior of the regret in the long-time regime for different types of final conditions for the zero-sum game. In the classical statement of the prediction problem the gain of the forecaster is compared against the gain of the best expert. In this case, the payoff function at maturity of the zero-sum game is given by $\Phi_m(x):=\max_i x^i$, $\forall x\in \R^N$, where $N$ is the number of experts in this game. One fundamental question is whether the long-time behavior of the prediction problem is robust with respect to the choice of this payoff function. Different choices of payoff functions are made in \cite[Proposition 4.1]{hart2001general}, \cite{2020arXiv200800052D,MR4053484}, see also the distinction between internal and external regret in \cite{foster1999regret}. In particular, in \cite{2020arXiv200800052D}, the authors assume that the payoff function satisfies a strict monotonicity condition which is for example not satisfied by the function $\Phi_m$.  Since the choice of the payoff function only impacts the final condition of the associated partial differential equation, the viscosity solution approach is a formidable tool to study the impact of the payoff function on the growth of the regret. Using these tools, we show that the long-time behavior of the regret have different regimes depending on whether we assume this strict monotonicity. 

The third contribution of our paper is to show that, although mathematically appealing, a comparison result for viscosity solutions of the limiting equation is not fundamental to obtain algorithms for the forecaster and the adversary and the growth of the regret. Indeed, similarly to {\cite{kobzar2020new,pmlr-v125-kobzar20a}}, algorithms for the adversary and a lower bound for the growth of regret can be found using a smooth subsolution of the limiting equation. Additionally, by considering a smooth supersolution of a relevant equation, one can construct an algorithm for the forecaster and an upper bound for the growth of the regret. As in Theorem~\ref{thm:viscosity2}, usually one can show that the infimum (supremum) limit of scaled value functions is a supersolution (subsolution) of the limiting equation. Therefore if a comparison result for viscosity solutions exists, one can conclude that the scaled value function converges and thus obtain the exact growth rate of regret. Note also that the Hamiltonian of the limiting equation we obtain has a discontinuous dependence on the first derivative and the equation is similar to the geometric equations studied in \cite{MR1100211,MR1119185,MR1924400,MR1988466}.

Finally, unlike in \cite{MR4120922,2020arXiv200813703C,MR4053484} where the gradient (or simple transformation of the gradient) of the solution to the limiting equation yields an asymptotic optimal algorithm for the forecaster, we show that this gradient may not provide an asymptotic optimal algorithm for the forecaster in our problem. Unfortunately, this point shows that the solution to the limiting equation might fail to capture some feature of the prediction problem. There are other variants of the prediction problem, e.g., \cite{10.5555/3044805.3044832} considers online learning when the time horizon is unknown and similar to our case the controls of the adversary are limited, and \cite{NIPS2010_0a0a0c8a} studies a repeated zero-sum game where an adversary plays on a budget.

The rest of the paper is organized as follows. In Section \ref{s.pf}, we formulate the problem of prediction against a limited adversary and state the relevant assumptions. In Section \ref{sec:PDE}, we heuristically derive the limiting equation and in Section \ref{sec:main} state our main results. The Section \ref{sec:special} contains special cases where we can explicitly solve the limiting equation.

\subsection{Notations}
Let $N\geq 2$ and denote $\{e^i\}_{i=1}^N$ the canonical basis of $\R^N$. We define $\1=\sum_{i=1}^N e^i$, $\R_+^N=[0,\infty)^N$. We denote by $\bS_N$ the set of symmetric matrices of dimension $N$. 
\
\section{Problem Formulation}\label{s.pf}
Consider a learning system with $N\geq 2$ experts, an adversary and a forecaster.  At each round $m$, each expert $i\in \{1,\ldots, N\}$ makes a prediction which yields a gain $g^i_m\in \{0,1\}$. Here $g^i_m=0$ (resp. $g^i_m=1$) represents that the prediction is wrong (resp. correct) at this round. We assume that each expert is correct with probability $\mu^i$, i.e., $\E[g^i_m]=\mu^i\in[0,1]$. Knowing the values of $\{\mu^i:i\in\{1,\ldots,N\}\}$, the adversary and the forcaster play a zero-sum game. At each round, the adversary picks one expert $A_m \in \{1,\ldots,N\}$, and sets his gain to $h_m \in \{0,1\}$. The adversary uses mixed type strategies and therefore, he chooses a distribution for $(A_m, h_m) \in \{1,\dotso,N \} \times \{0,1\}$ that may depend on the past history of the game. 
The realized gain $\Delta G^i_m$ of the expert $i$ at round $m$ is 
$$\Delta G^i_m=g^i_m\1_{\{i\neq A_m\}}+h_m\1_{\{i=A_m\}}$$
and the total gain of the expert $i$ is
$$G^i_m=\sum_{k=1}^m\Delta G^i_k.$$
The fact that the adversary can only interfere on the outcome of the prediction of one expert is the main difference between our framework and the classical prediction with expert advice problems in \cite{DBLP:conf/colt/AbernethyABR09,cesa1997use,cesa2006prediction,MR4053484,10.5555/2884435.2884474}, and also the bandit problems with corruption such as \cite{pmlr-v99-gupta19a} and \cite{10.1145/3188745.3188918} where the regret bounds provided depend on the corruption. However, unlike \cite{2020arXiv200210286A} and \cite{MR3918712}, the adversary can optimally control the level of corruption at each round and therefore the level of corruption might be unbounded. 

If the forecaster chooses to follow expert $F_m \in \{1,\ldots,N\}$ at each round, then his gain is given by 
$$G_m:=\sum_{k=1}^m \Delta G_k:=\sum_{k=1}^m \Delta G^{F_k}_k.$$
The state of the zero-sum game between the adversary and the forecaster is 
$$X_m=(X^1_m,\ldots, X^N_m)=(G^1_m-G_m,\ldots G^N_m-G_m)$$
which evolves as
\begin{align*}
\Delta X_m&=(\Delta G^1_m-\Delta G_m,\ldots,\Delta G^N_m-\Delta G_m).
\end{align*}
Given the state, the control of the adversary is $\alpha_m=\{(a_m^i,b^i_m)\}_{i=1,\ldots N}$ where 
$$a_m^i=\P(A_m=i, h_m=0), \quad b^i_m=\P(A_m=i, h_m=1),$$
and the control of the forecaster is $\phi_m=\{\phi_m^i\}_{i=1,\ldots N}$ where
$$\phi_m^i=\P(F_m=i).$$
We assume that the random variables $\{g^i_m\}\cup\{(A_m,h_m)\}\cup\{F_m\}$ are mutually independent. 
 \begin{remark}\label{rem:convex} 
We do not assume that $A_m$ and $h_m$ are independent and this point is crucial. Indeed, in the definition of admissible strategies, if we require $A_m$ and $h_m$ to be independent, then the set of admissible distributions of $\Delta G_m$  might fail to be convex. Then, we would not be able to apply the minimax theorem to have a saddle point for the interaction between the adversary and the forecaster. 

However, since we assume that $A_m$ and $h_m$ are not required to be independent, the set of distributions of $\Delta G_m$ is isomorphic to 
$$\cA:=\left\{((a^i)_{i=1}^N,(b^i)^{N}_{i=1})\in [0,1]^N\times [0,1]^N: \sum_{i=1}^N a^i+b^i=1\right\},$$
which is convex. 
 \end{remark}
Simple computation yields that for all $j\in\{1,\ldots, N\}$,
\begin{align}
\mathbb{E}^{\alpha_m} [\Delta G_m^j]&= (1-a_m^j-b_m^j)\mu^j+b_m^j, \label{eq:eX1} \\
 \mathbb{E}^{\phi_m, \alpha_m} [\Delta X_m^j]&=(1-a_m^j-b_m^j)\mu^j+b_m^j-\sum_{i=1}^N\phi^i_m((1-a_m^i-b_m^i)\mu^i+b_m^i). \label{eq:eX}
 \end{align}

Suppose the maturity is $M>0$ and let $\Phi:\R^N\mapsto \R$ be a given function. We define the regret of the forecaster via
\begin{align*}
\Phi(X_M)=\Phi( G_M^1-G_M,\ldots , G_M^N-G_M).
\end{align*}
We now state the following assumptions on $\Phi$. 
\begin{assume}[Assumptions on the final condition]\label{assume:final}
(i) $\Phi$ is Lipschitz continuous and increasing in the sense that 
$$\Phi(x+y)\geq \Phi(x)\mbox{ for all }x\in \R^N \mbox{ and }y\in \R_+^N.$$

(ii)For all $x\in \R^N$ and $\lambda>0$, $\Phi(\lambda x)=\lambda \Phi(x)$ and 
$\Phi(x+\lambda \1)= \Phi(x)+\lambda$. 

(iii)There exists $\theta>0$ so that $$\Phi(x+y)\geq \Phi(x)+\frac{\theta}{N} y\cdot \1 \mbox{ for all }x\in \R^N\mbox{ and }y\in \R_+^N.$$
\end{assume} 
Trivially, (i) and (ii) holds for classical examples of functions such as  
\begin{align}\label{defpm}
\Phi_m(x): =\max_i x^i.
\end{align}
However, this choice of final value does not satisfy (iii). In order to satisfy all the assumption, one can perturb the function $\Phi_m(x)$ as
$$\Phi_{m,\theta}(x):= \left(1-\theta\right)\Phi_m(x)+\frac{\theta}{N} \sum_i x^i$$ for $\theta \in (0,1)$ by making the forecaster partially satisfied if he does better than the average.
Our Theorems \ref{thm:viscosity2} and \ref{thm:viscosity1} 
below state that the leading order expansion of the regret crucially depends on whether $\Phi$ satisfies the Assumption \ref{assume:final} (iii) or not.

The objective of the forecaster is to minimize his expected regret at maturity $M$ while the objective of the adversary is to maximize the regret of the forecaster. Then, given the terminal condition  $\Phi$, for $x\in \R^N$ and $m\in \{0,\ldots,M-1\}$, we can define the value function of interest via the iteration 
\begin{align}
V^M(M,x)&:=\Phi(x) \label{def:terminal}\\
V^M(m,x)&:=\min_{\phi_m}\max_{\alpha_m} \mathbb{E}^{\phi_m, \alpha_m} [V^M(m+1,x+ \Delta X_m  ) ] \label{def:V},
\end{align}
where $\E^{\phi_m,\alpha_m}$ is the expectation given the choices of $\phi_m$ and $\alpha_m$. Since the space of strategies is the same for each $m$, we might suppress $m$ from notation $\phi_m,\alpha_m, \Delta G_m, \Delta X_m$ in the dynamic programming equation \eqref{def:V}.

We have the following result for the value function. 
\begin{lemma}\label{lem:property}
Under Assumption \ref{assume:final} (i) and (ii), for all $m\in \{0,\ldots, M\}$ and $(x,y)\in  \R^N\times \R^N_+$, we have the following relations
\begin{align}
V^M(m,x)&=\max_{\alpha}\min_{\phi} \mathbb{E}^{\phi, \alpha} [V^M(m+1,x+ \Delta X  ) ]\\
V^M(m,x+\lambda \1)&=V^M(m,x)+\lambda,\, \mbox{ and }V^M(m,x+y)\geq V^M(m,x).\label{eq:translation}
\end{align}
If we also make the Assumption \ref{assume:final} (iii), then
\begin{align}\label{eq:strict}
V^M(m,x+y)\geq V^M(m,x)+\frac{\theta}{N} y\cdot \1.
\end{align}
\end{lemma}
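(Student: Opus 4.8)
The plan is a single backward induction on $m$ from $M$ down to $0$: I would establish the constant–shift and monotonicity relations \eqref{eq:translation} and \eqref{eq:strict} for every $m$, and the maximin identity for $m\in\{0,\ldots,M-1\}$. The base case $m=M$ is immediate, since $V^M(M,\cdot)=\Phi$, so that \eqref{eq:translation} is exactly Assumption~\ref{assume:final}(i) and (ii) and \eqref{eq:strict} is Assumption~\ref{assume:final}(iii). Everything else will propagate down one level because the one–step operations $\E^{\phi,\alpha}$, $\max_\alpha$, and $\min_\phi$ appearing in \eqref{def:V} are monotone and commute with the addition of a constant.

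For the maximin identity, fix $m\le M-1$ and $x\in\R^N$, assume $V^M(m+1,\cdot)$ has been defined at the previous step, and set $f(\phi,\alpha):=\E^{\phi,\alpha}[V^M(m+1,x+\Delta X)]$. First I would note that $\Delta X$ is supported on a finite set (the $g^i_m$, $(A_m,h_m)$ and $F_m$ all take finitely many values), so $f$ is just a finite weighted sum of the real numbers $V^M(m+1,x+\xi)$ — no integrability concern — and the extrema in \eqref{def:V} are attained over the compact domains involved. Next, using the mutual independence of $\{g^i_m\}\cup\{(A_m,h_m)\}\cup\{F_m\}$ together with the identification of $\alpha$ and $\phi$ with the laws of $(A_m,h_m)$ and $F_m$, I would put $f$ in manifestly bilinear form: conditioning first on the value of $(A_m,h_m)$ (each atom carrying probability $a^i_m$ or $b^i_m$) and then on $F_m$ (with probability $\phi^i_m$) writes $f(\phi,\alpha)$ as a sum over $i$ of the probabilities $a^i_m$ and $b^i_m$ multiplied by quantities affine in $\phi$. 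Hence $f$ is affine — in particular convex and concave — in each of $\phi$ and $\alpha$ separately and continuous; the forecaster's domain is the simplex $\{\phi\in[0,1]^N:\sum_i\phi^i=1\}$, and, crucially (this is the point of Remark~\ref{rem:convex}), the adversary's domain is the \emph{convex} compact set $\cA$. Sion's minimax theorem then yields $\min_\phi\max_\alpha f=\max_\alpha\min_\phi f$, which is the asserted identity.

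To propagate \eqref{eq:translation}--\eqref{eq:strict}, suppose they hold at level $m+1$ and fix $x\in\R^N$, $y\in\R_+^N$, $\lambda>0$. Since $\Delta X$ does not depend on the current state, for every realization and every pair $(\phi,\alpha)$ the inductive hypothesis gives the pointwise relations $V^M(m+1,x+\lambda\1+\Delta X)=V^M(m+1,x+\Delta X)+\lambda$, $\ V^M(m+1,x+y+\Delta X)\ge V^M(m+1,x+\Delta X)$, and $\ V^M(m+1,x+y+\Delta X)\ge V^M(m+1,x+\Delta X)+\frac{\theta}{N}\,y\cdot\1$. Applying $\E^{\phi,\alpha}$ and then $\max_\alpha$ and $\min_\phi$ preserves each of these, by monotonicity and invariance under additive constants, which gives \eqref{eq:translation} and \eqref{eq:strict} at level $m$ and closes the induction. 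The only step requiring genuine care is the minimax argument: one must package the one–step payoff so that its bilinear (hence convex--concave) structure is visible, and it is precisely here, as stressed in Remark~\ref{rem:convex}, that allowing $A_m$ and $h_m$ to be dependent — so that $\cA$, rather than a nonconvex set, is the adversary's strategy space — is indispensable, since otherwise the minimax theorem is unavailable and the maximin identity can fail.
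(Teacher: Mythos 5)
Your proposal is correct and follows essentially the same route as the paper: a backward induction transferring the translation/monotonicity properties of $\Phi$ from Assumption~\ref{assume:final} through the dynamic programming recursion, and a minimax theorem applied to the bilinear (affine in each argument) one-step payoff over the convex set $\cA$, exactly as flagged in Remark~\ref{rem:convex}. Your treatment is merely more explicit (finite support instead of the Lipschitz integrability remark, and Sion's theorem named in place of "the classical minimax theorem"), but the substance is the same.
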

\begin{proof}
The integrability of the random variables are a direct consequence of the Lipschitz continuity of $\Phi$ that passes to $V^M$ by induction. 
It is clear that for all $\phi$ the mapping 
$\a\mapsto \E^{\phi,\a}[V^M(m+1,x+ \Delta X  )]$ is linear therefore convex. Similarly, for all $\a$ the mapping 
$\phi\mapsto \E^{\phi,\a}[V^M(m+1,x+ \Delta X  )]$ is concave. Given the Remark \ref{rem:convex}, we can apply the classical minimax theorem to commute the min and the max. \eqref{eq:translation} is a simple consequence of the invariance of the final condition $\Phi$ in Assumption \ref{assume:final} (ii) and similarly \eqref{eq:strict} is a consequence of Assumption \ref{assume:final} (iii). 
\end{proof}

\section{PDE describing the long-time regime}\label{sec:PDE}

In order to study the behavior of $V^M$ for large $M$, we define the scaled value function as 
\begin{align*}
u^{M}(t,x) = \frac{1}{\sqrt{M}} V^M \left(\lceil Mt \rceil, \sqrt{M}x \right).
\end{align*}
Thanks to Lemma \ref{lem:property}, it can be easily seen that $u^M$ satisfies equations 
\begin{align}\label{eq:sdpp}
u^M(t,x)&=\min_{\phi}\max_{\alpha} \mathbb{E}^{\phi,\alpha} \left[u^M\left( t+\frac{1}{M}, x+\frac{1}{\sqrt{M}} \Delta X \right) \right]  \\
&=\max_{\alpha} \min_{\phi}\mathbb{E}^{\phi,\alpha} \left[u^M\left( t+\frac{1}{M}, x+\frac{1}{\sqrt{M}} \Delta X \right) \right]  \notag,
\end{align} 
where $\phi, \alpha$ are the strategies of the forecaster and the adversary respectively. 

Our objective is to study the value function $V^M$ for large $M$ via the limit of the scaled function $u^M$. In order to illustrate the underlying ideas of our main results and define the relevant quantities, we first assume that $u^{M} \to u$ as $M \to \infty$, and that  $u$ is regular enough. According to the Taylor expansion of the right-hand side of \eqref{eq:sdpp}, we obtain that
\begin{align}\label{eq:exdpp}
0&= \min_{\phi}\max_{\alpha} \mathbb{E}^{\phi, \alpha} \left[\sqrt{M}{\nabla u(t,x) \cdot \Delta X}+\partial_t u(t,x) +\frac{1}{2}\sum_{i,j=1}^N \pa_{ij}^2 u(t,x)\Delta X^i\Delta X^j\right]+o(1)\\
&= \max_{\alpha} \min_{\phi}\mathbb{E}^{\phi, \alpha} \left[\sqrt{M}{\nabla u(t,x) \cdot \Delta X}+\partial_t u(t,x) +\frac{1}{2}\sum_{i,j=1}^N \pa_{ij}^2 u(t,x)\Delta X^i\Delta X^j\right]+o(1).
\end{align}
For large enough $M$, in order to have the equality in this expansion, the following conditions have to hold for all $(t,x)\in [0,1)\times \R^N$, 
\begin{align}\label{eq:dpp2}
0&=\min_{\phi}\max_{\alpha}\nabla u(t,x) \cdot \mathbb{E}^{\phi, \alpha} \left[{ \Delta X}\right]=\max_{\alpha} \min_{\phi}\nabla u(t,x) \cdot\mathbb{E}^{\phi, \alpha} \left[{\Delta X}\right].
\end{align}
Additionally, \eqref{eq:translation} yields that 
$$\1\cdot \nabla u=1.$$

Regarding the control of the adversary, assume that there exists $j_0,j_1\in\{1,\ldots,N\}$ so that 
$$(1-a^{j_0}-b^{j_0})\mu^{j_0}+b^{j_0}< (1-a^{j_1}-b^{j_1})\mu^{j_1}+b^{j_1}.$$
We can define the control $\phi$ by 
\begin{align}\label{def:strat}
\phi^j=\pa_j u(t,x)\mbox{ if }j\in\{1,\ldots,N\} \setminus \{j_0,j_1\},
\end{align}
and 
$$\phi^{j_1}=\pa_{x_{j_0}} u(t,x)+\pa_{x_{j_1}} u(t,x),\, \phi^{j_0}=0.$$
Computing 
\begin{align}\label{eq:dpp4}
0&=\max_{\alpha} \min_{\phi}\nabla u(t,x) \cdot\mathbb{E}^{\phi, \alpha} \left[{\Delta X}\right] \notag \\
&=\max_{\alpha} \min_{\phi}\sum_{j=1}^N\left(\pa_j u(t,x) -\phi^j\right)\left((1-a^j-b^j)\mu^j+b^j\right),
\end{align}
this choice of $\phi$ leads to  
\begin{align*}
&\min_{\phi}\sum_{j=1}^N\left(\pa_j u(t,x) -\phi^j\right)\left((1-a^j-b^j)\mu^j+b^j\right)\\
&\leq \pa_{x_{j_0}} u(t,x) \left((1-a^{j_0}-b^{j_0})\mu^{j_0}+b^{j_0}-(1-a^{j_1}-b^{j_1})\mu^{j_1}-b^{j_1}\right). 
\end{align*}
If $ \pa_{x_{j_0}} u(t,x)>0$, we obtain a contradiction with \eqref{eq:dpp4}.  
Therefore, the first order condition \eqref{eq:dpp2} implies that for all $j_0$ so that $ \pa_{x_{j_0}} u(t,x)>0$, we have $\mathbb{E}^{\alpha}[\Delta G^{j_0}]=(1-a^{j_0}-b^{j_0})\mu^{j_0}+b^{j_0}=\sup_j \left((1-a^{j}-b^{j})\mu^{j}+b^{j}\right)=\sup_j\mathbb{E}^{\alpha}[\Delta G^j].$
In order to describe the dynamics of $u$, we define 
\begin{align*}
\cA(p):=\left\{\a\in  \cA: \E^{\alpha}[\Delta G^i]=\sup_j \E^{\alpha}[\Delta G^j] \text{ if $p_i>0$} \right\},
\end{align*}
for any $p \in [0,\infty)^N$.

Therefore, by the heuristic expansion above, one expects that if there is a limit $u$ of $u^M$, then $u$ has to solve
 \begin{align*}
0&=\partial_t u(t,x) +\frac{1}{2}\max_{\alpha \in \cA (\pa u(t,x))}\sum_{i,j=1}^N \pa_{ij}^2 u(t,x) \mathbb{E}^{ \alpha,\pa u (t,x)} \left[\Delta X^i\Delta X^j\right].
\end{align*}
Since $\1 \cdot \nabla u=1$ implies that $\1 \cdot \nabla^2 u=0$, the above equation is equivalent to that 
\begin{align}\label{eq:pde1}
0&=\partial_t u(t,x) +\frac{1}{2}\max_{\alpha \in \cA (\pa u(t,x))}\sum_{i,j=1}^N \pa_{ij}^2 u(t,x) \mathbb{E}^{ \alpha} \left[\Delta G^i\Delta G^j\right].
\end{align}
For notational simplicity, for all $p\in \R^N_+$ and $S\in \bS_N$, we define
\begin{align}\label{eq:H}
H(p,S)&:=\frac{1}{2} \max_{\alpha \in \mathcal{A}(p)} \sum_{i,j=1}^N S_{ij} \E^{\a}[\Delta G^i \Delta G^j],
\end{align}
so that \eqref{eq:pde1} can be written as 
\begin{align}\label{eq:pde}
0&=\partial_t u(t,x)+ H(\nabla u(t,x),\nabla^2 u(t,x)).
\end{align}

Equations of type \eqref{eq:pde} are studied in \cite{MR1100211,MR1119185,MR1924400,MR1988466} in the context of geometric flows. In particular \cite{MR1924400} provides a stochastic representation for geometric flow type equations. Note that our equation \eqref{eq:pde} is not geometric in the sense of \cite[Equation (1.3)]{MR1205984} and our problem can be seen as a deterministic game where the adversary and the forecaster chooses (deterministic controls) in $\cA$ and the simplex of dimension $N$. Thus, in this regard, similar to \cite{doi:10.1002/cpa.20336}, our main results can be seen as representations for the solutions to \eqref{eq:pde} as the limit of deterministic games (whenever wellposedness of \eqref{eq:pde} holds). 

Similar equations also appear in \cite{2020arXiv200813703C,2020arXiv200800052D,2020arXiv200712732D} in the context of prediction. In particular, our Assumption \ref{assume:final} (iii) is inspired by \cite{2020arXiv200800052D} where the authors study the long-time behavior of a prediction problem where the experts are history-dependent and not controlled by the adversary. This point has a fundamental impact on the problem. Indeed, impressively, the limiting equation in \cite{2020arXiv200800052D} is geometric and can be solved by considering the evolution of its level sets. Similarly to \cite{MR4053484,10.5555/2884435.2884474}, in our framework the adversary has to solve a control problem in the long-time regime. Thus, the equation \eqref{eq:pde} is fully nonlinear and in general it is not solvable via geometric methods. However, in some particular cases, we find explicit solutions to \eqref{eq:pde} by finding an optimal control for the adversary; see Section \ref{sec:special}. 

Unlike the various cases in the literature where the generator is continuous on $\R^N-\{0\}$, depending on the specification of $(\mu_i)$, $H$ might fail to be continuous in $p$ on the set $\{(p,S)\in \R_+^N\times \bS_N: p_i=0 \mbox{ for some }i\}$. This lack of continuity has a crucial impact on the wellposedness for viscosity solution of \eqref{eq:pde} and the comparison result for this PDE is not available in the literature. 

Note also that under the Assumption \ref{assume:final} (iii), formally, we have the inequality $ \pa_{x_{j}} u^M(t,x)\geq \frac{\theta}{N}>0$ for all $j\in \{1,\ldots, N\}$. Thus, in this case, one expects that 
\begin{align*}
\cA(\nabla u(t,x)):=\left\{\a\in  \cA: \E^{\alpha}[\Delta G^i]=\sup_j \E^{\alpha}[\Delta G^j],\, \forall i\right\},
\end{align*}
and the set of strategies for the adversary yields the balanced strategies defined in \cite{10.5555/2884435.2884474}.
\begin{definition}
$\mathcal{A}_B$ denotes the set of ``balanced" strategies $\alpha$ for the adversary, i.e., strategies $\a\in \cA$ satisfying 
\begin{align}
 \mathbb{E}^{\alpha} [\Delta G^{j_0}]= \mathbb{E}^{ \alpha} [\Delta G^{j_1}]
\end{align}
for all $j_0,j_1\in \{1,\ldots,N\}$. For any $\alpha \in \mathcal{A}_B$, we define 
\begin{align}\label{def:c}
c_{\alpha}:=\E^{\alpha}[\Delta G^i]=(1-a^i-b^i)\mu^i+b^i, \text{ for any $i=1,\dotso, N$.}
\end{align}

\end{definition}
Note that for any $p\in (0,\infty)^N$, $\cA(p)=\cA_B$ no matter this set is empty or not. 
We provide a necessary and sufficient condition on $(\mu_i)$ for the existence of balanced strategies. 
\begin{proposition}\label{prop:existbalanced}
The set of balanced strategies $\mathcal{A}_B$ is not empty if and only if 
\begin{align}\label{eq:existbalanced}
\inf_{c \in [0,1]} \sum_{i=1}^N \left(\frac{\mu^i-c}{\mu^i} \vee \frac{c-\mu^i}{1-\mu^i} \right) \leq 1,
\end{align}
 where we make the convention $\frac{0}{0}=0$. 
\end{proposition}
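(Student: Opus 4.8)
The plan is to characterize when a balanced strategy exists by treating the balance constraint as a system and identifying the degrees of freedom available to the adversary. Recall $\alpha = ((a^i),(b^i)) \in \cA$ means $a^i,b^i \in [0,1]$ and $\sum_i (a^i+b^i) = 1$, and $\E^\alpha[\Delta G^i] = (1-a^i-b^i)\mu^i + b^i$. First I would observe that a balanced strategy with common value $c_\alpha = c$ requires, for each $i$, that $(1-a^i-b^i)\mu^i + b^i = c$. For a fixed target $c \in [0,1]$, this is one linear equation in the two unknowns $(a^i, b^i)$ for each $i$; writing $s^i := a^i + b^i \in [0,1]$ (the total mass the adversary places on expert $i$), the equation becomes $b^i - \mu^i s^i = c - \mu^i$, i.e. $b^i = \mu^i s^i + (c - \mu^i)$, with $a^i = s^i - b^i = (1-\mu^i)s^i - (c-\mu^i)$. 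The constraints $a^i \ge 0$, $b^i \ge 0$, $s^i \le 1$ then translate into a lower bound on $s^i$: from $b^i \ge 0$ we need $s^i \ge (\mu^i - c)/\mu^i$ when $c < \mu^i$ (and no constraint otherwise, using the convention $0/0 = 0$ when $\mu^i = 0$), and from $a^i \ge 0$ we need $s^i \ge (c - \mu^i)/(1-\mu^i)$ when $c > \mu^i$. Combining, the minimal feasible $s^i$ is exactly $s^i_{\min}(c) := \bigl(\tfrac{\mu^i - c}{\mu^i}\bigr) \vee \bigl(\tfrac{c - \mu^i}{1-\mu^i}\bigr) \vee 0$, and one checks this lies in $[0,1]$ so the upper constraint $s^i \le 1$ is automatically compatible.

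Next I would assemble the global constraint: a balanced strategy with value $c$ exists if and only if we can choose $s^i \in [s^i_{\min}(c), 1]$ with $\sum_i s^i = 1$. Since each $s^i$ ranges over an interval whose left endpoint is $s^i_{\min}(c) \in [0,1]$, such a choice exists precisely when $\sum_i s^i_{\min}(c) \le 1$ (the upper end gives $\sum_i s^i$ up to $N \ge 2 \ge 1$, so the sum $1$ is always attainable from above once the lower bound is satisfied). Therefore $\cA_B \ne \emptyset$ if and only if there exists $c \in [0,1]$ with $\sum_{i=1}^N s^i_{\min}(c) \le 1$, i.e. if and only if $\inf_{c \in [0,1]} \sum_{i=1}^N \bigl(\tfrac{\mu^i-c}{\mu^i} \vee \tfrac{c-\mu^i}{1-\mu^i}\bigr) \le 1$, where the $\vee 0$ can be dropped because for each $i$ at least one of the two fractions is nonnegative (if $c \le \mu^i$ the first is $\ge 0$; if $c \ge \mu^i$ the second is $\ge 0$). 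This is exactly \eqref{eq:existbalanced}.

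For the reverse direction I would simply run the construction backwards: given $c$ attaining (or approaching) the infimum with $\sum_i s^i_{\min}(c) \le 1$, pick any $s^i \in [s^i_{\min}(c),1]$ summing to $1$, set $b^i = \mu^i s^i + (c-\mu^i)$ and $a^i = (1-\mu^i)s^i - (c-\mu^i)$, and verify directly that $a^i, b^i \in [0,1]$, $\sum_i(a^i+b^i) = \sum_i s^i = 1$, and $\E^\alpha[\Delta G^i] = c$ for all $i$, so $\alpha \in \cA_B$. Since the infimum over the compact set $[0,1]$ of the continuous (piecewise-rational) function $c \mapsto \sum_i s^i_{\min}(c)$ is attained, there is no issue passing from "$\inf \le 1$" to an actual witness $c$.

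The main technical care — not a deep obstacle but the place where mistakes hide — is the bookkeeping of the three sign cases ($c < \mu^i$, $c = \mu^i$, $c > \mu^i$) together with the degenerate endpoints $\mu^i \in \{0,1\}$ and the $\tfrac00 = 0$ convention, to confirm that $s^i_{\min}(c)$ as written genuinely equals the minimal feasible total mass on expert $i$ and always lies in $[0,1]$. The key structural insight making the proof clean is that allowing $A_m$ and $h_m$ to be dependent (Remark~\ref{rem:convex}) is what gives each expert an independent free parameter $s^i$, decoupling the per-expert balance equations and reducing the whole question to the single scalar inequality \eqref{eq:existbalanced}.
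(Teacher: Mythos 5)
Your proposal is correct and follows essentially the same route as the paper: both proofs rest on identifying $\frac{\mu^i-c}{\mu^i}\vee\frac{c-\mu^i}{1-\mu^i}$ as the minimal total mass $a^i+b^i$ the adversary must place on expert $i$ to achieve the common value $c$, summing these over $i$, and conversely building a balanced strategy when the sum is at most $1$. The only difference is cosmetic — you select feasible $s^i\in[s^i_{\min}(c),1]$ summing to one and back out $(a^i,b^i)$, whereas the paper writes down explicit rescaled formulas — so the two arguments are essentially the same.
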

\begin{proof}
The proof of is provided in the Appendix.
\end{proof}
For notational simplicity, we also define the generator, 
\begin{align}\label{eq:HB}
H_B(S)&:=\frac{1}{2} \max_{\alpha \in \mathcal{A}_B} \sum_{i,j=1}^N S_{ij} \E^{\a}[\Delta G^i \Delta G^j],
\end{align}
whose definition is motivated by the fact that  
\begin{align*}
H^*(p,S)&:=\limsup_{(q,R)\to (p,S)}H(q,R)=H(p,S)\mbox{ and}\\
H_*(p,S)&:=\liminf_{(q,R)\to (p,S)}H(q,R)=H_B(S)\mbox{ for all }p\in \R^N_+ \text{ if $\cA_B\neq \emptyset$}.
\end{align*}

Given this discontinuity of the generator, we provide here the definition of viscosity solutions which is also available in \cite{MR1119185}.
\begin{definition}\label{def:viscosity}
An upper (resp. lower) semicontinuous function $u$ is a viscosity subsolution (resp. supersolution) of \eqref{eq:pde} if for all $(t,x)\in [0,1)\times \R^N$ and smooth function $\phi$ so that $u-\phi$ has a local maximum (resp. minimum) at $(t,x)$, we have that 
$$-\pa_t \phi(t,x)-H^*(\nabla\phi(t,x),\nabla^2\phi(t,x) )\leq 0$$
$$(resp. -\pa_t \phi(t,x)-H_*(\nabla\phi(t,x),\nabla^2\phi(t,x) )\geq 0).$$
 
\end{definition}

\section{Main results}\label{sec:main}
In this section, we provide the main results regarding the growth of regret and asymptotically optimal strategies of the forecaster and the adversary. The results fundemantally depend on whether $\cA_B=\emptyset $ or not.

\subsection{Growth of regret for the case $\mathcal{A}_B \not= \emptyset$}
We assume in this subsection that $\mathcal{A}_B \not= \emptyset$. We prove the following priori bound for $u^M$. 
\begin{lemma}\label{lem:lineargrowth}
Assume that Assumption \ref{assume:final} (i) holds. Then, there exists a constant C independent of $M$ such that for all $(t,x)\in [0,1]\times \R^N$
\begin{align*}
|u^M(t, x)- \Phi(x) | \leq C(2-t).
\end{align*}
\end{lemma}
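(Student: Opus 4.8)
The plan is to bracket the unscaled value function between $\Phi$ and $\Phi$ up to an error of order $\sqrt{M-m}$, and then to rescale. Concretely, I would establish that there is a constant $C$, independent of $M$, $m$ and $y$, with
\[
\Phi(y)-C\sqrt{M-m}\ \le\ V^M(m,y)\ \le\ \Phi(y)+C\sqrt{M-m},\qquad 0\le m\le M,\ y\in\R^N.
\]
Granting this, since $\Phi(\sqrt M x)=\sqrt M\,\Phi(x)$ by Assumption~\ref{assume:final}(ii), one has $u^M(t,x)-\Phi(x)=\frac{1}{\sqrt M}\big(V^M(\lceil Mt\rceil,\sqrt M x)-\Phi(\sqrt M x)\big)$, so $|u^M(t,x)-\Phi(x)|\le \frac{C}{\sqrt M}\sqrt{M-\lceil Mt\rceil}\le C\sqrt{1-t}\le C(2-t)$, which is the claim (in fact with the stronger bound $C\sqrt{1-t}$). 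A genuine argument is unavoidable here because the naive one-step Lipschitz estimate $|V^M(m,y)-V^M(m+1,y)|\le L\|\Delta X\|$ only telescopes to an $O(M-m)$ bound, which explodes after the $\frac1{\sqrt M}$ scaling; both inequalities must exploit cancellation over the remaining $M-m$ rounds.

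For the \emph{upper bound} I would note, directly from \eqref{def:V}, that $V^M(m,y)$ is at most the payoff obtained by any fixed forecaster strategy against the worst adversary. Take the forecaster to run the exponentially weighted average forecaster on the observed gain vectors $(\Delta G^i_k)_i$ over rounds $m+1,\dots,M$ (admissible, since the setting is full information and $F_k$ is independent of $(g^i_k)_i$ and $(A_k,h_k)$). Writing $\tilde G^i=\sum_{k=m+1}^M\Delta G^i_k$ and $\tilde G=G_M-G_m$, monotonicity together with the translation identity of Assumption~\ref{assume:final} gives
\[
\Phi(X_M)=\Phi\big(y+(\tilde G^i)_i-\tilde G\,\1\big)\ \le\ \Phi\big(y+(\max_i\tilde G^i)\,\1\big)-\tilde G\ =\ \Phi(y)+\max_i\tilde G^i-\tilde G,
\]
and the classical (pathwise-in-the-gains, hence adversary-agnostic) regret bound for this forecaster yields $\E[\max_i\tilde G^i-\tilde G]\le C\sqrt{(M-m)\log N}$ for every adversary strategy, so $V^M(m,y)\le \Phi(y)+C\sqrt{M-m}$.

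For the \emph{lower bound} I would invoke the standing hypothesis $\cA_B\neq\emptyset$ of this subsection. Fix a balanced strategy $\alpha^*\in\cA_B$ and let the adversary play it at every round; then by the max--min form in Lemma~\ref{lem:property}, $V^M(m,y)\ge \inf_\phi\E^{\phi,\alpha^*}[\Phi(X_M)\mid X_m=y]$. For any forecaster strategy, $\E[\Delta G^i_k-\Delta G_k\mid\mathcal{F}_{k-1}]=c_{\alpha^*}-c_{\alpha^*}=0$ by \eqref{def:c} (with $\mathcal{F}_{k-1}$ the information before round $k$, and $F_k$ independent of the round-$k$ randomness), so each coordinate $(X^i_n)_{n\ge m}$ is a martingale with increments bounded by $1$; hence $\E|X_M-y|^2=\sum_i\E|X^i_M-y^i|^2\le N(M-m)$, and Lipschitz continuity of $\Phi$ gives $\E[\Phi(X_M)]\ge \Phi(y)-L\sqrt{N(M-m)}$ uniformly in $\phi$, i.e.\ $V^M(m,y)\ge \Phi(y)-C\sqrt{M-m}$.

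The main obstacle is the lower bound, and it is exactly where $\cA_B\neq\emptyset$ is indispensable: without a balanced strategy the state process carries a non-vanishing componentwise drift, and the forecaster can steer it (e.g.\ when $\Phi$ is the average and the $\mu^i$ are unequal) so as to push $\Phi(X_M)$ below $\Phi(y)$ by an amount linear in $M-m$, so the estimate genuinely fails — which is why the paper separates the cases $\cA_B\neq\emptyset$ and $\cA_B=\emptyset$. The upper bound is softer; the only real content there is that one must use a forecaster with $O(\sqrt{\ \cdot\ })$ regret (the pure Lipschitz bound is not enough) and exploit the translation identity to reduce $\Phi(X_M)$ to the classical regret $\max_i\tilde G^i-\tilde G$.
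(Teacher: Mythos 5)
Your proof is correct, but it takes a genuinely different route from the paper's. The paper mollifies $\Phi$ and runs a backward induction directly on the scaled dynamic programming equation: in a one-step Taylor expansion the $\sqrt{M}$-order term is cancelled by letting the forecaster play $\phi=\nabla\tilde\Phi$ (for the upper direction) and by restricting the adversary to a balanced strategy (for the lower direction), so each remaining step contributes only $O(1/M)$ and the errors sum to a constant, giving $C(2-t)$. You instead prove the unscaled two-sided estimate $|V^M(m,y)-\Phi(y)|\le C\sqrt{M-m}$: the upper bound by having the forecaster run exponential weights (admissible, since after normalization its weights are a function of the state) and reducing $\Phi(X_M)$ to the classical external regret via monotonicity and the translation identity; the lower bound by fixing a balanced adversary, under which each coordinate of $X$ is a bounded-increment martingale, and then using Lipschitz continuity; finally you rescale using positive homogeneity. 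Your route buys a sharper bound, $C\sqrt{1-t}$, vanishing at maturity (the paper's constant does not, because of the mollification error), and it makes the probabilistic mechanism explicit; the paper's route stays entirely within the DPP/Taylor framework used later and needs no external regret bound. One caveat: you invoke Assumption~\ref{assume:final}~(ii) (homogeneity for the rescaling, translation for the regret reduction), while the lemma formally assumes only (i); this is harmless in context, since (ii) is in force wherever the lemma is applied and the paper's own argument also implicitly needs it (the terminal identification $u^M(1,x)=\Phi(x)$ and the admissibility of $\phi=\nabla\tilde\Phi$ require it), but it is worth flagging. Both arguments rely on the subsection's standing assumption $\cA_B\neq\emptyset$, which you correctly use only for the lower bound.
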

\begin{proof}
We provide the proof of the Lemma in the Appendix. 
\end{proof}
Given Lemma \ref{lem:lineargrowth}, we define the functions 
\begin{align}\label{eq:semilimits}
\overline{u}(t,x):=\limsup\limits_{(M,s,y) \to (\infty,t,x)} u^M(s,y)\\
\underline{u}(t,x):=\liminf\limits_{(M,s,y) \to (\infty,t,x)} u^M(s,y).
\end{align}
The following comparison principle is a special case of \cite[Theorem 2.1]{MR1119185} backwards in time.
\begin{lemma}\label{comp}Under Assumption \ref{assume:final} (i)-(ii) and 
subject to the final condition $U(1,x)=\Phi(x)$, there exists a unique viscosity solution to 
\begin{align}\label{eq:pde2}
0&=\partial_t U(t,x)+ H_B(\nabla^2 U(t,x))
\end{align}
 that grows at most linearly and is uniformly continuous. We will denote this unique solution by $U$. Moreover, if $u_1$ is a subsolution, and $u_2$ is a supersolution, then comparison principle holds, i.e., $u_1 \leq U\leq u_2$ on $[0,1]\times \R^N$. 
\end{lemma}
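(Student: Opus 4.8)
The plan is to recast \eqref{eq:pde2} in the standard form treated in \cite{MR1119185} and then check that its generator $H_B$ meets the hypotheses of \cite[Theorem 2.1]{MR1119185}. First I would reverse time: setting $\tilde U(t,x):=U(1-t,x)$, equation \eqref{eq:pde2} becomes the forward degenerate-parabolic equation $\partial_t \tilde U-H_B(\nabla^2\tilde U)=0$ on $(0,1]\times\R^N$ with initial datum $\tilde U(0,\cdot)=\Phi$; a function is a subsolution (resp.\ supersolution) of \eqref{eq:pde2} if and only if its time reversal is a supersolution (resp.\ subsolution) of the forward equation, so comparison, existence and uniqueness for one are equivalent to the same for the other.

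Next I would record the structural properties of $H_B$. For $\alpha\in\cA_B$ the matrix $M^\alpha:=\bigl(\E^\alpha[\Delta G^i\Delta G^j]\bigr)_{i,j}$ satisfies $\xi^\top M^\alpha\xi=\E^\alpha\bigl[(\sum_i\xi_i\Delta G^i)^2\bigr]\ge 0$ for all $\xi\in\R^N$, hence $M^\alpha\in\bS_N$ is positive semidefinite; consequently $S\mapsto\sum_{i,j}S_{ij}M^\alpha_{ij}$ is nondecreasing for the order on $\bS_N$, and a maximum of nondecreasing functions is nondecreasing, so $H_B$ is degenerate elliptic. Since $\Delta G^i\in\{0,1\}$ we have $0\le M^\alpha_{ij}\le 1$, whence $|H_B(S)-H_B(S')|\le \tfrac{N^2}{2}\,\|S-S'\|$; thus $H_B$ is globally Lipschitz, continuous, $H_B(0)=0$, and — crucially — $H_B$ carries no dependence on the gradient, so the delicate continuity/structure conditions of \cite{MR1119185} near $p=0$ hold trivially (this is precisely why $H_B$, rather than the discontinuous $H$, appears here). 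Also $\cA_B$ is compact when nonempty, so $H_B$ is well defined. With these properties, \cite[Theorem 2.1]{MR1119185} applied to the forward equation on $\R^N$ yields the comparison principle $u_1\le u_2$ for any subsolution $u_1$ and supersolution $u_2$ with the prescribed datum and at most linear growth, and in particular uniqueness.

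Existence follows by Perron's method: mollifying $\Phi$ to $\Phi_\eta$ ($L$-Lipschitz, $C^2$ with Hessian bound $C_\eta$ and $\|\Phi_\eta-\Phi\|_\infty\to 0$), one checks that for $K_\eta\ge NC_\eta/2$ the functions $\Phi_\eta(x)\mp K_\eta(1-t)$ are ordered sub/supersolutions of \eqref{eq:pde2} (the Hessian of a test function touching $\Phi_\eta$ is controlled by $\pm C_\eta I$, and $H_B$ is monotone with $|H_B(\pm C_\eta I)|\le NC_\eta/2$); Perron's method together with the comparison principle then produces a unique solution $U_\eta$ with datum $\Phi_\eta$, and $U_\eta\to U$ as $\eta\to 0$ by stability. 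Uniform continuity in $x$ follows by comparing $U$ with its spatial translates: for $h\in\R^N$, $U(\cdot,\cdot+h)-L|h|$ is again a subsolution of \eqref{eq:pde2} with terminal value $\Phi(\cdot+h)-L|h|\le\Phi(\cdot)$, so comparison gives $U(t,x+h)-U(t,x)\le L|h|$, hence $|U(t,x+h)-U(t,x)|\le L|h|$ for all $t$; this also yields the linear growth, since $|U(t,x)|\le\sup_{t\in[0,1]}|U(t,0)|+L|x|$ with $U(\cdot,0)$ bounded on $[0,1]$. Uniform continuity in $t$, and hence joint uniform continuity, follows from the construction (or from \cite{MR1119185}) via a standard barrier argument using the spatial Lipschitz bound and the Lipschitz continuity of $H_B$.

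The step I expect to be the main obstacle is making sure the hypotheses of \cite[Theorem 2.1]{MR1119185} genuinely apply on the \emph{unbounded} domain $\R^N$ with data $\Phi$ that is only Lipschitz, hence unbounded: one must verify that the growth class in which comparison is asserted there is compatible with ``at most linear growth'', and, if necessary, reproduce the doubling-of-variables argument with a mild penalization such as $\eta(1+|x|^2+|y|^2)^{1/2}$ to localize. The degenerate ellipticity and the global Lipschitz continuity of $H_B$ ensure the correction terms produced by such a penalization are controlled and vanish as $\eta\to0$; the remaining ingredients (ellipticity, $H_B(0)=0$, translation invariance, continuity) are routine.
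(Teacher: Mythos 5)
Your proposal is correct and follows essentially the same route as the paper, which simply invokes \cite[Theorem 2.1]{MR1119185} after noting that $H_B$ is a continuous, degenerate elliptic Hamiltonian with no gradient dependence (so the singular-equation hypotheses there are trivially met). Your additional verifications — time reversal, positive semidefiniteness of $\bigl(\E^{\a}[\Delta G^i\Delta G^j]\bigr)_{ij}$, Perron existence with mollified data, and the translation argument for uniform continuity and linear growth — are accurate elaborations of what the paper leaves to the cited reference.
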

\begin{proof}
The result is a direct consequence of \cite[Theorem 2.1]{MR1119185}.
\end{proof}
Thanks to the identity $H_*(p,S)=H_B(S)$, any supersolution to \eqref{eq:pde} is also a supersolution to \eqref{eq:pde2}. In the following theorem, using this property, we show that $U$ provides a lower bound for the scaled value function. 
\begin{theorem}\label{thm:viscosity2}
Assume that Assumption \ref{assume:final} (i) and (ii) holds. Then, $\underline{u}$(resp. $\overline u$) is a supersolution (resp subsolution) of \eqref{eq:pde} subject to the terminal condition $\underline{u}(1,x)=\overline{u}(1,x)= \Phi (x)$, and hence the solution of \eqref{eq:pde2} provides a lower bound to the growth of regret as 
\begin{align}\label{eq:lowerb}
\liminf_{M\to \infty} \frac{1}{\sqrt{M}} V^M \left(\lceil Mt \rceil, \sqrt{M}x \right)\geq\underline{u}(t,x) \geq U(t,x),
\end{align}
where $U$ is the unique viscosity solution to \eqref{eq:pde2}.
\end{theorem}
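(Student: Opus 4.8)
The proof is a half–relaxed limits (Barles–Perthame) argument feeding into the comparison principle of Lemma~\ref{comp}; the substance is in handling the $O(\sqrt M)$ first–order term of the discrete equation \eqref{eq:sdpp} and the discontinuity of $H$. First I would record that the relaxed limits are well behaved: by Lemma~\ref{lem:lineargrowth} the family $(u^M)$ is locally uniformly bounded, so $\underline u$ is lower semicontinuous, $\overline u$ is upper semicontinuous, both are finite, and $\underline u\le\overline u$; passing the identities \eqref{eq:translation} through the limits, $\underline u$ and $\overline u$ are nondecreasing and $\underline u(t,x+\lambda\1)=\underline u(t,x)+\lambda$ (and likewise for $\overline u$). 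This last property is what lets me, in every viscosity test below, assume without loss of generality that the test function $\psi$ also satisfies $\1\cdot\nabla\psi\equiv1$, hence $\nabla^2\psi\,\1=0$; this is precisely what makes the forecaster's control disappear from the quadratic term, since $\E^{\phi,\alpha}[\Delta X^\top\nabla^2\psi\,\Delta X]=\E^{\alpha}[\Delta G^\top\nabla^2\psi\,\Delta G]$ whenever $\nabla^2\psi\,\1=0$, matching the generator \eqref{eq:H}.

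For the supersolution property of $\underline u$, I take smooth $\psi$ with $\underline u-\psi$ having a strict local minimum at $(t_0,x_0)\in[0,1)\times\R^N$ and, by the usual perturbation argument, points $(M_k,t_k,x_k)\to(\infty,t_0,x_0)$ at which $u^{M_k}-\psi$ has a local minimum over a fixed ball with $u^{M_k}(t_k,x_k)\to\underline u(t_0,x_0)$. Since $|\Delta X|\le\sqrt N$, the perturbed point $(t_k+\tfrac1{M_k},x_k+\tfrac1{\sqrt{M_k}}\Delta X)$ stays in the ball for $k$ large, so inserting $\psi$ into the $\max_\alpha\min_\phi$ form of \eqref{eq:sdpp} and cancelling the common constant gives $\psi(t_k,x_k)\ge\max_\alpha\min_\phi\E^{\phi,\alpha}[\psi(t_k+\tfrac1{M_k},x_k+\tfrac1{\sqrt{M_k}}\Delta X)]$. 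Taylor–expanding $\psi$ and choosing for the adversary any balanced $\alpha_0\in\cA_B$ (nonempty by hypothesis) makes $\E^{\phi,\alpha_0}[\Delta X]=0$ for every $\phi$, so the $O(1/\sqrt{M_k})$ term vanishes, the $\1$–reduction kills the $\phi$–dependence of the $O(1/M_k)$ term, and dividing by $1/M_k$ and letting $k\to\infty$ yields $0\ge\partial_t\psi(t_0,x_0)+\tfrac12\E^{\alpha_0}[\Delta G^\top\nabla^2\psi(t_0,x_0)\,\Delta G]$; taking the supremum over $\alpha_0\in\cA_B$ gives $-\partial_t\psi-H_B(\nabla^2\psi)\ge0$, i.e. the supersolution inequality for \eqref{eq:pde} since $H_*(p,S)=H_B(S)$. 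Relaxing the strict minimum to a local one is standard. The subsolution property of $\overline u$ is symmetric, starting from the $\min_\phi\max_\alpha$ form: after the same expansion one sees that a near–maximizing $\alpha_k$ is forced to satisfy $\sum_j\partial_j\psi\,\E^{\alpha_k}[\Delta G^j]-\max_i\E^{\alpha_k}[\Delta G^i]=O(1/\sqrt{M_k})$ (otherwise this $O(1/\sqrt{M_k})$ term, which is $\le0$, would dominate), so any limit point $\alpha^*$ lies in $\cA(\nabla\psi(t_0,x_0))$ and one is left with $0\le\partial_t\psi(t_0,x_0)+\tfrac12\E^{\alpha^*}[\Delta G^\top\nabla^2\psi\,\Delta G]\le\partial_t\psi+H(\nabla\psi,\nabla^2\psi)=\partial_t\psi+H^*(\nabla\psi,\nabla^2\psi)$.

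For the terminal condition I use the time–homogeneity of the recursion \eqref{def:V}: since it does not see $m$ we have $V^M(m,\cdot)=V^{M-m}(0,\cdot)$, so with the homogeneity $\Phi(\lambda x)=\lambda\Phi(x)$ from Assumption~\ref{assume:final}(ii) one gets $u^M(1,y)=\Phi(y)$ and, for $s<1$, $u^M(s,y)=\tfrac1{\sqrt M}V^{T}(0,\sqrt M\,y)$ with $T=M-\lceil Ms\rceil$. Applying Lemma~\ref{lem:lineargrowth} at horizon $T$ and time $0$ (its constant is horizon–independent) together with the homogeneity of $\Phi$ yields $|u^M(s,y)-\Phi(y)|\le 2C\sqrt{T/M}\le 2C\sqrt{1-s}$, whence $\overline u(1,x)=\underline u(1,x)=\Phi(x)$ by continuity of $\Phi$. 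It then remains to conclude: because $H_*(p,S)=H_B(S)$, the supersolution $\underline u$ of \eqref{eq:pde} is also a supersolution of \eqref{eq:pde2} with the same terminal data $\Phi=U(1,\cdot)$, so Lemma~\ref{comp} gives $\underline u\ge U$ on $[0,1]\times\R^N$; and $\underline u(t,x)=\liminf_{(M,s,y)\to(\infty,t,x)}u^M(s,y)\le\liminf_{M\to\infty}u^M(t,x)$, which is exactly the chain \eqref{eq:lowerb}.

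The main obstacle is the sub/supersolution step: the plain Taylor expansion of \eqref{eq:sdpp} carries an $O(\sqrt M)$ first–order term that blows up relative to the $O(1/M)$ time scale and must be neutralised by exploiting the right strategy — a balanced adversary policy on the supersolution side, the gradient–aligned set $\cA(\nabla\psi)$ on the subsolution side — while the $\1$–translation invariance reduction is what lets the forecaster's control drop out of the quadratic term and what aligns the limit with $H$ rather than with the $\Delta X$–form of the generator. The remaining ingredients — finiteness and semicontinuity of the relaxed limits, the boundary layer at $t=1$, and the passage to $U$ through comparison — are routine consequences of Lemmas~\ref{lem:lineargrowth} and~\ref{comp}.
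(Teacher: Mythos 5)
Your proof follows essentially the same route as the paper: half-relaxed (Barles--Perthame) limits of the scaled dynamic programming principle, killing the $O(\sqrt{M})$ first-order term with a balanced adversary strategy on the supersolution side and forcing near-maximizing adversary strategies into $\cA(\nabla\psi)$ on the subsolution side, then passing through $H_*=H_B$ and the comparison result of Lemma~\ref{comp} to obtain $\underline u\geq U$ and \eqref{eq:lowerb}. Your additions are refinements the paper leaves implicit---the normalization $\1\cdot\nabla\psi\equiv 1$ of the test function (the paper's pointwise use of $\nabla\psi\cdot\1=1$ requires the same reduction to remove the forecaster from the quadratic term) and the verification of the terminal condition $\underline u(1,\cdot)=\overline u(1,\cdot)=\Phi$ via time-homogeneity of \eqref{def:V} together with Lemma~\ref{lem:lineargrowth}.
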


\begin{proof} 
The proof is almost the same as \cite[Theorem 2.1]{MR1115933} and \cite[Theorem 7]{MR4053484}, and we only indicate our modifications. 
We first show the supersolution property of $\underline u$. Let $(t_0,x_0)\in [0,1)\times \R^N$ and $\psi$ smooth so that 
$\underline u-\psi$ has a strict local minimum at $(t_0,x_0)$. Then, similarly to \cite[Theorem 2.1]{MR1115933}, there exists $M_n\to 0$ and 
$(s_n,y_n)\to (t_0,x_0)$ satisfying
$$u^{M_n}(s_n,y_n)\to \underline u(t_0,x_0)\mbox{ and }u^{M_n}-\psi\mbox{ has a local minimum at }(s_n,y_n).$$
Denote $\xi_n=u^{M_n}(s_n,y_n)-\psi(s_n,y_n)$ that converges to $0$. The dynamic programming principle \eqref{eq:sdpp} and the minimality condition for $u^{M_n}-\psi$ yields that 
\begin{align*}
\xi_n&=u^{M_n}(s_n,y_n)-\psi(s_n,y_n)\\
&=\min_{\phi}\max_{\alpha} \mathbb{E}^{\phi,\alpha} \left[u^M\left( s_n+\frac{1}{M_n}, y_n+\frac{1}{\sqrt{M_n}} \Delta X \right) -\psi(s_n,y_n)\right]\\
&\geq \min_{\phi}\max_{\alpha} \mathbb{E}^{\phi,\alpha} \left[\psi\left( s_n+\frac{1}{M_n}, y_n+\frac{1}{\sqrt{M_n}} \Delta X \right) -\psi(s_n,y_n)+\xi_n\right]
\end{align*}
where we used the minimality of $u^{M_n}-\psi$ to obtain the inequality. Given that $\psi$ is fixed, we can now proceed to expand as in \eqref{eq:exdpp} to have 
$$o(1)\geq \min_{\phi}\max_{\alpha\in \cA} \mathbb{E}^{\phi, \alpha} \left[\sqrt{M_n}{\nabla \psi(s_n,y_n)\cdot \Delta X}+\partial_t \psi(s_n,y_n) +\frac{1}{2}\sum_{i,j=1}^N \pa_{ij}^2 \psi(s_n,y_n)\Delta X^i\Delta X^j\right].
$$
By restricting the choice of $\alpha$ this inequality in particular implies that 
\begin{align*}
o(1)&\geq \min_{\phi}\max_{\alpha\in \cA_B} \mathbb{E}^{\phi, \alpha} \left[\sqrt{M_n}{\nabla \psi(s_n,y_n)\cdot \Delta X}+\partial_t \psi(s_n,y_n) +\frac{1}{2}\sum_{i,j=1}^N \pa_{ij}^2 \psi(s_n,y_n)\Delta X^i\Delta X^j\right]\\
&\geq \max_{\alpha\in\cA_B} \mathbb{E}^{\phi, \alpha} \left[\partial_t \psi(s_n,y_n) +\frac{1}{2}\sum_{i,j=1}^N \pa_{ij}^2 \psi(s_n,y_n)\Delta X^i\Delta X^j\right]
\end{align*}
where we used the fact that for a balanced strategy the regret does not depend on the strategy of the forecaster. 
$u^M$ satisfies 
\begin{align}\label{inv2}
u^M(s,y+\lambda \1)=u^M(s,y)+\lambda
\end{align} 
for all $(s,y)\in [0,1]\times \R^N$ and $\lambda \in \R$, and $u^{M_n}-\psi$ has a local minimum at $(s_n,y_n)$. Thus, $\psi$ satisfies $\nabla \psi(s_n,y_n)\cdot \1=1$. Therefore, similarly as in \eqref{eq:pde1}, we easily obtain that 
\begin{align*}
o(1)&\geq \partial_t \psi(s_n,y_n) + H_B(\nabla^2 \psi(s_n,y_n) )=\pa_t  \psi(s_n,y_n)+H_*(\nabla \psi(s_n,y_n),\nabla^2 \psi(s_n,y_n) ).
\end{align*}
The convergence of $(s_n,y_n)$, and the continuity of $H_B$ concludes the proof of the super solution. 

We now prove the subsolution property of $\overline u$. Similarly as above, for a given $(t_0,x_0)\in [0,1)\times \R^N$ and $\psi$ smooth so that 
$\underline u-\psi$ has a strict local maximum at $(t_0,x_0)$, we can establish that 
\begin{align}\label{eq:dppsub}
o(1)\leq \max_{\alpha\in \cA} \min_{\phi}\left\{\sqrt{M_n}\sum_{j=1}^N\left(\pa_j \psi(s_n,y_n) -\phi^j\right)\left((1-a^j-b^j)\mu^j+b^j\right)\right.\\
\left.+\partial_t \psi(s_n,y_n)+\frac{1}{2}\sum_{i,j=1}^N \pa_{ij}^2 \psi(s_n,y_n)\mathbb{E}^{\phi, \alpha} \left[\Delta X^i\Delta X^j\right]\right\}.\notag
\end{align}
Let $\a\notin \cA(\nabla \psi(s_n,y_n))$, then there exists $i\in \{1,\ldots,N\}$ so that $\pa_i \psi(s_n,y_n)>0$ and 
$$(1-a^i-b^i)\mu^i+b^i<\sup_j (1-a^j-b^j)\mu^j+b^j.$$ 
Similarly as \eqref{def:strat}, for such a strategy one can find strategy $\phi$ for the forecaster so that 
$$\sum_{j=1}^N\left(\pa_j \psi(s_n,y_n) -\phi^j\right)\left((1-a^j-b^j)\mu^j+b^j\right)\leq -\epsilon<0$$
for all $n$ large enough. Thus, the maximum in \eqref{eq:dppsub} cannot be achieved at such a strategy for $n$ large enough. Therefore, \eqref{eq:dppsub} yields
\begin{align*}
o(1)&\leq \max_{\alpha\in \cA(\nabla \psi(s_n,y_n))} \min_{\phi}\left\{\sqrt{M_n}\sum_{j=1}^N\left(\pa_j \psi(s_n,y_n) -\phi^j\right)\left((1-a^j-b^j)\mu^j+b^j\right)\right.\\
&\left.+\partial_t \psi(s_n,y_n)+\frac{1}{2}\sum_{i,j=1}^N \pa_{ij}^2 \psi(s_n,y_n)\mathbb{E}^{\phi, \alpha} \left[\Delta X^i\Delta X^j\right]\right\}\\
&\leq \max_{\alpha\in \cA(\nabla \psi(s_n,y_n))} \left\{\sqrt{M_n}\sup_j \left((1-a^j-b^j)\mu^j+b^j\right) \left( \sum_{j=1,\,  \pa_{j} \psi(s_n,y_n)>0}^N\pa_j \psi(s_n,y_n) -1\right)\right.\\
&\left.+\partial_t \psi(s_n,y_n)+\frac{1}{2}\sum_{i,j=1}^N \pa_{ij}^2 \psi(s_n,y_n)\mathbb{E}^{\alpha} \left[\Delta G^i\Delta G^j\right]\right\}\\
&\leq\partial_t \psi(s_n,y_n)+ \max_{\alpha\in \cA(\nabla \psi(s_n,y_n))} \frac{1}{2}\sum_{i,j=1}^N \pa_{ij}^2 \psi(s_n,y_n)\mathbb{E}^{ \alpha} \left[\Delta G^i\Delta G^j\right]
\end{align*}
where we use the fact that 
$$\sum_{j=1,\,  \pa_{j} \psi(s_n,y_n)>0}^N \pa_j \psi(s_n,y_n) =1,$$
 and 
$$\sum_{i,j=1}^N \pa_{ij}^2 \psi(s_n,y_n)\mathbb{E}^{\phi, \alpha} \left[\Delta X^i\Delta X^j\right]=\sum_{i,j=1}^N \pa_{ij}^2 \psi(s_n,y_n)\mathbb{E}^{ \alpha} \left[\Delta G^i\Delta G^j\right]$$
due to $\nabla \psi(t_n,y_n)\cdot \1=1$.
Thus, we finally obtain that 
$$o(1) \leq \partial_t \psi(s_n,y_n)+H(\nabla \psi(s_n,y_n),\nabla^2 \psi(s_n,y_n)),$$
which leads to the subsolution property
$$0 \leq \partial_t \psi(t_0,x_0)+H^*(\nabla \psi(t_0,x_0),\nabla^2 \psi(t_0,x_0)).$$

Given the supersolution property of $\underline u$, the identity $H_*(p,S)=H_B(S)$ and the comparison result in Lemma~\ref{comp}, we easily have  that $\underline u\geq U$ which implies \eqref{eq:lowerb}.
\end{proof}
\begin{remark}
Although, it is mathematically appealing to have a comparison result for the PDE \eqref{eq:pde}, we do not need it for practical problems such as lower bound of growth of regret such as \eqref{eq:lowerb}. The lower bound of regret is a consequence of supersolution property of $\underline u$ and a comparison result for the PDE \eqref{eq:pde2} (which is  a significantly simpler task than a comparison for \eqref{eq:pde}). 

\end{remark}

\begin{remark}\label{rmk:upperbound}
The classical online problem easily yields an upper bound. Indeed, in this problem the adversary decides on the distribution of experts' predictions at each round, i.e., the adversary chooses an element in the probability space over $\{0,1\}^N$, see  \cite{MR4120922,2019arXiv190202368B,MR4053484}, etc. Denote the value function of this game by $W^M(m,x)$. According to \cite[Theorem 7]{MR4053484}, we have that 
\begin{align*}
\lim\limits_{M \to \infty} \frac{1}{\sqrt{M}} W^M\left( \lceil Mt \rceil, \sqrt{M}x \right) = w(t,x),
\end{align*}
where $w(t,x)$ is the viscosity solution to 
\begin{align}\label{pde:forsup}
&\pa_t w(t,x)+ \frac{1}{2} \max_{v \in \{0,1\}^N} \langle \pa^2_{xx} w (t,x) \cdot v, v \rangle =0, \\
&w(1,x)=\Phi(x).
\end{align}
Since the adversary in this game fully controls the the prediction of experts, it can be easily seen that $V^M(m,x) \leq W^M(m,x)$, and therefore we obtain that 
\begin{align}\label{exp:forsup}
V^M(\lceil Mt \rceil, \sqrt{M}x) \leq w(t,x) \sqrt{M}+ o(\sqrt{M}). 
\end{align}
\end{remark}

Both in the classical problem in Remark \ref{rmk:upperbound} and in the description of the lower bound function $u$, the set of strategies of the adversary are balanced. We now provide a counter example that shows that without Assumption \ref{assume:final} (iii) it might be optimal for the adversary to choose a non balanced strategy by exhibiting a case where \eqref{eq:lowerb} is strict. Thus, unlike in \cite{10.5555/2884435.2884474}, with corruption, the optimal strategy of the adversary is not always balanced. This example also shows that, in general, $\overline u$ can not be a subsolution to \eqref{eq:pde2}, but has to be characterized as a subsolution to \eqref{eq:pde}. We will show in Theorem \ref{thm:viscosity1} that Assumption \ref{assume:final} (iii) is in fact sufficient to obtain that $\underline u=\overline u$ and solves  \eqref{eq:pde2}.
\begin{example}\label{eq:counter}
For $N=3$, $\mu^1=0, \mu^2=\mu^3=1$, it can be easily verified that $\mathcal{A}_B=\{(a^i,b^i): b^1=1\}$, i.e., the adversary always corrupts the first expert, and set his gain to $1$. Then the viscosity solution of \eqref{eq:pde2} is $U(t,x)=\Phi_m(x)=\max_i x^i$. However, if the adversary chooses the strategy $(a^2=a^3=1/2)$, then we have that $u^M(t,x)>\Phi_m(x)$ for any $t \in [0,1)$. Therefore $\limsup_{M \to \infty} u^M(t,x)$ cannot always be a subsolution of \eqref{eq:pde2}.
\end{example}

The following Theorem and Example \ref{eq:counter} show the importance of Assumption \ref{assume:final} (iii), which allows us to obtain the exact growth rate of regret. With Assumption \ref{assume:final} (iii), formally we obtain that $\nabla u^M \in (\theta/N, +\infty)^N$, and thus the adversary is forced to use balanced strategies. Therefore, we can show that the scaled value function converges to the solution $U$ of \eqref{eq:pde2}. 
\begin{theorem}\label{thm:viscosity1}
Assume that Assumption \ref{assume:final} (i), (ii)  and (iii) hold. Then, $\underline{u}$(resp. $\overline u$) is a  lower (resp. upper) semicontinuous viscosity supersolution (resp. subsolution) of \eqref{eq:pde2} subject to the terminal condition $\underline{u}(1,x)=\overline{u}(1,x)= \Phi (x)$. Therefore, $\overline u=\underline u=U$ provides the growth rate of regret as $$ V^M \left(\lceil Mt \rceil, \sqrt{M}x \right)= U(t,x)\sqrt{M}+o(\sqrt{M}).$$
\end{theorem}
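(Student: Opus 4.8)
The plan is to obtain Theorem~\ref{thm:viscosity1} as a consequence of Theorem~\ref{thm:viscosity2} and the comparison principle of Lemma~\ref{comp}. The only genuinely new ingredient is that Assumption~\ref{assume:final}(iii) forces the gradient of any smooth test function touching $\overline u$ from above to lie in $(0,\infty)^N$; at such points the discontinuous Hamiltonian $H^*$ agrees with the continuous Hamiltonian $H_B$, so the subsolution property for the full equation~\eqref{eq:pde} that was already proved in Theorem~\ref{thm:viscosity2} upgrades to a subsolution property for~\eqref{eq:pde2}. The supersolution side requires no new work.

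Concretely, I would first pass the quantitative monotonicity~\eqref{eq:strict} through the scaling and the relaxed limits: dividing by $\sqrt M$ gives $u^M(t,x+y)\ge u^M(t,x)+\tfrac{\theta}{N}\,y\cdot\1$ for all $y\in\R^N_+$, and taking $\limsup$ (resp.\ $\liminf$) along the defining sequences yields $\overline u(t,x+y)\ge\overline u(t,x)+\tfrac{\theta}{N}\,y\cdot\1$ and $\underline u(t,x+y)\ge\underline u(t,x)+\tfrac{\theta}{N}\,y\cdot\1$ on $[0,1]\times\R^N$. Now let $\psi$ be smooth with $\overline u-\psi$ having a local maximum at $(t_0,x_0)\in[0,1)\times\R^N$. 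Testing with $y=s e^j$ and letting $s\downarrow 0$, the maximality and the displayed inequality give $\pa_j\psi(t_0,x_0)\ge\theta/N>0$ for every $j$, hence $\nabla\psi(t_0,x_0)\in(0,\infty)^N$. Since $\cA_B\neq\emptyset$ throughout this subsection, the identity $H^*(p,S)=H(p,S)$ holds for every $p\in\R^N_+$, and the observation that $\cA(p)=\cA_B$ for every $p\in(0,\infty)^N$ gives $H(\nabla\psi(t_0,x_0),\nabla^2\psi(t_0,x_0))=H_B(\nabla^2\psi(t_0,x_0))$. Combining this with the subsolution inequality for~\eqref{eq:pde} already established in Theorem~\ref{thm:viscosity2}, namely $0\le\pa_t\psi(t_0,x_0)+H^*(\nabla\psi(t_0,x_0),\nabla^2\psi(t_0,x_0))$, we conclude $0\le\pa_t\psi(t_0,x_0)+H_B(\nabla^2\psi(t_0,x_0))$, i.e.\ $\overline u$ is a viscosity subsolution of~\eqref{eq:pde2}. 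For $\underline u$ nothing new is needed: the proof of Theorem~\ref{thm:viscosity2} restricts the adversary to $\cA_B$ from the outset and directly produces the supersolution inequality for~\eqref{eq:pde2}. The upper/lower semicontinuity of $\overline u,\underline u$, the terminal identification $\overline u(1,\cdot)=\underline u(1,\cdot)=\Phi$, and the at most linear growth of both (via Lemma~\ref{lem:lineargrowth}) are all inherited from the setting of Theorem~\ref{thm:viscosity2}.

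With these facts in hand I would invoke Lemma~\ref{comp}: $\overline u$ is a subsolution and $\underline u$ a supersolution of~\eqref{eq:pde2}, both of at most linear growth and matching the terminal datum $\Phi$, so $\overline u\le U\le\underline u$ on $[0,1]\times\R^N$. Since $\underline u\le\overline u$ holds by the very definition of the half-relaxed limits, we get $\underline u=\overline u=U$. As $U$ is uniformly continuous, the coincidence of the two half-relaxed limits with a continuous function improves to local uniform convergence $u^M\to U$ by the standard Barles--Perthame argument: if convergence failed on some compact set one could extract, along a subsequence and a sequence of points converging in that set, a point at which $\liminf u^{M_k}$ and $\limsup u^{M_k}$ disagree, contradicting $\underline u=\overline u$. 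Recalling $u^M(t,x)=\tfrac1{\sqrt M}V^M(\lceil Mt\rceil,\sqrt M x)$ then gives $V^M(\lceil Mt\rceil,\sqrt M x)=U(t,x)\sqrt M+o(\sqrt M)$, locally uniformly in $(t,x)$.

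I do not expect a serious obstacle here: the theorem is essentially a corollary of Theorem~\ref{thm:viscosity2}. The one point that needs a careful (if routine) argument is the passage from the discrete finite-difference estimate~\eqref{eq:strict} to the strict positivity $\nabla\psi(t_0,x_0)\in(0,\infty)^N$ at maxima of $\overline u-\psi$, together with the asymmetric nature of this fact: the analogous lower bound on test-function gradients fails (and is not needed) at minima of $\underline u-\psi$, which is exactly why the supersolution half must be taken from Theorem~\ref{thm:viscosity2} rather than re-derived, and why Assumption~\ref{assume:final}(iii) is indispensable --- without it $\overline u$ need only be a subsolution of~\eqref{eq:pde}, as Example~\ref{eq:counter} shows.
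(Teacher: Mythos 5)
Your proposal is correct and follows essentially the same route as the paper: Assumption~\ref{assume:final}(iii), passed through the scaling to $u^M$ and then to the half-relaxed limits, forces $\nabla\psi(t_0,x_0)\in\left[\tfrac{\theta}{N},\infty\right)^N$ at the touching points, so that $H^*(\nabla\psi,\nabla^2\psi)=H(\nabla\psi,\nabla^2\psi)=H_B(\nabla^2\psi)$ there, the sub/supersolution properties from the argument of Theorem~\ref{thm:viscosity2} become properties for \eqref{eq:pde2}, and Lemma~\ref{comp} then yields $\overline u=\underline u=U$ and the stated expansion. The only (harmless) inaccuracy is your closing remark that the gradient lower bound \emph{fails} at minima of $\underline u-\psi$: it in fact holds there as well (apply the monotonicity inequality with increments $-s e^j$, $s>0$, on the minimality side), which is why the paper asserts $\nabla\psi(t_0,x_0)\in\left[\tfrac{\theta}{N},\infty\right)^N$ at both kinds of extrema, although, as you correctly note, the supersolution half is anyway already delivered by Theorem~\ref{thm:viscosity2} through the identity $H_*(p,S)=H_B(S)$.
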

\begin{proof}
By \eqref{eq:strict}, we have that
\begin{align*}
u^M(t,x+y) &= \min_{\phi}\max_{\alpha} \mathbb{E}^{\phi,\alpha} \left[u^M\left( t+\frac{1}{M}, x+y+\frac{1}{\sqrt{M}} \Delta X \right) \right] \\
& \geq \min_{\phi}\max_{\alpha} \mathbb{E}^{\phi,\alpha} \left[u^M\left( t+\frac{1}{M}, x+\frac{1}{\sqrt{M}} \Delta X \right) \right] + \frac{\theta}{N} \langle y , \1 \rangle  \\
& = u^M(t,x) + \frac{\theta}{N} \langle y , \1 \rangle. 
\end{align*}
Therefore if $\underline{u}-\psi$ or $\bar{u}-\psi$ attains a local extreme at $(t_0,x_0) \in [0,1) \times \mathbb{R}^N$,  it follows that $\nabla \psi(t_0,x_0) \in \left[\frac{\theta}{N},+\infty\right)^N$. Then, following the same arguments as in the proof of Theorem~\ref{thm:viscosity2},
we obtain the sign of $$ \partial_t \psi(s_n,y_n)+H(\nabla \psi(s_n,y_n),\nabla^2 \psi(s_n,y_n))$$
for $(s_n,y_n)\to (t_0,x_0)$. The conclusion $\nabla \psi(t_0,x_0) \in \left[\frac{\theta}{N},+\infty\right)^N$ and the identity $H(p,S)=H_B(S)$ if $p_i>0$ for all $i$ allows us to obtain the sign of  
$$ \partial_t \psi(s_n,y_n)+H(\nabla \psi(s_n,y_n),\nabla^2 \psi(s_n,y_n))= \partial_t \psi(s_n,y_n)+H_B(\nabla^2 \psi(s_n,y_n)).$$
Therefore we obtain the required viscosity property. The conclusion of the theorem follows by the Lemma \ref{comp}.
\end{proof}

Given the solution $U$ of \eqref{eq:pde2}, we design strategies for the adversary. For a fixed maturity $M$, denote $\tilde{x}:=\frac{x}{\sqrt{M}}, t_m:= \frac{m}{M}$. We define the strategy for the adversary
\begin{align}\label{eq:asymptoticadversary}
 \boldsymbol{\alpha}^*=(\a^*_1, \dotso, \a^*_{M})
\end{align}
  via $$\a^*_m(x)= \argmax_{\a \in \mathcal{A}_B} \sum_{i,j=1}^N  \pa_{ij}^2 U(t_{m-1}, \tilde{x}) \E^{\a}[\Delta G^i \Delta G^j] .$$
Define 
\begin{align*}
\underline{V}^M(0,x)=\inf_{\boldsymbol{\phi}}\E^{\boldsymbol{\phi}, \boldsymbol{\a}^*}[\Phi(X_M) \, | \, X_0=x],
\end{align*}
where $\bs{\phi}=(\phi_1,\dotso,\phi_M)$ is any strategy of the forecaster. In the next proposition, we will show that 
\begin{align}\label{eq:asymptotic}
\lim\limits_{M \to \infty} \frac{1}{\sqrt{M}}  \underline{V}(0,\sqrt{M}x)  \geq U(0,x),
\end{align}
under assumptions on $U$. In Section~\ref{sec:special}, we will verify these assumptions for a special case. 
\begin{proposition}\label{prop:asymptotic}
Assume that Assumption \ref{assume:final} (i) and (ii) hold. Suppose the solution $U$ to \eqref{eq:pde2} is smooth and  satisfies the derivative bounds 
\begin{align}\label{eq:boundder}
|\pa_{tt}^2 U (1-t,x)| \leq \frac{C}{{t}^{\frac{3}{2}}}, \quad |\pa_{tx}^2 U (1-t,x)| \leq \frac{C}{t}, \quad  |\pa_{xxx}^3 U (1-t,x)| \leq \frac{C}{t}, \quad \text{ $\forall x \in \mathbb{R}^N$,}
\end{align}
for some positive constant $C$. Then \eqref{eq:asymptotic} holds. Therefore, according to Theorem~\ref{thm:viscosity2} the asymptotic strategy $\bs{\a}^*$ in \eqref{eq:asymptoticadversary} for the adversary guarantees $U$ as a lower bound of regret, i.e.,
\begin{align*}
\lim\limits_{M \to \infty} \frac{1}{M}V^M(0,\sqrt{M}x) \geq \lim\limits_{M \to \infty} \frac{1}{M} \underline{V}^M(0,\sqrt{M}x) \geq U(0,x). 
\end{align*} 

\end{proposition}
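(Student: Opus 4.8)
The plan is to carry out a discrete Itô (verification) estimate along the trajectory of the game in which the adversary uses the Markov strategy $\bs\a^*$ of \eqref{eq:asymptoticadversary} and the forecaster uses an arbitrary strategy $\bs\phi$. Fix $M$, take $X_0=\sqrt M x$, and write $\tilde X_m:=X_m/\sqrt M$, $t_m:=m/M$, $Y_m:=U(t_m,\tilde X_m)$. By $U(1,\cdot)=\Phi$ and the positive $1$-homogeneity in Assumption~\ref{assume:final}(ii) we have $Y_M=U(1,\tilde X_M)=\Phi(X_M)/\sqrt M$, while $Y_0=U(0,x)$; hence it suffices to show $\E^{\bs\phi,\bs\a^*}[Y_M\mid X_0=\sqrt M x]\ge U(0,x)-o(1)$ with the $o(1)$ uniform in $\bs\phi$. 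Telescoping $Y_M-Y_0=\sum_{m=1}^M(Y_m-Y_{m-1})$, this reduces to estimating the conditional one-step increments $\E[Y_m-Y_{m-1}\mid\mathcal F_{m-1}]$, where $\mathcal F_{m-1}$ denotes the history up to round $m-1$.

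The core of the argument is the one-step expansion, which I would perform in two stages about $(t_{m-1},\tilde X_{m-1})$: first a Taylor expansion in $t$ at the spatial point $\tilde X_m$ (first order, with a $\pa^2_{tt}U$ remainder), then a Taylor expansion in $x$ (second order, with a $\pa^3_{xxx}U$ remainder); replacing $\pa_t U(t_{m-1},\tilde X_m)$ by $\pa_t U(t_{m-1},\tilde X_{m-1})$ produces one further error of size $O\big(M^{-3/2}\,|\pa^2_{tx}U|\big)$. Two algebraic cancellations then occur, both relying on $\bs\a^*$ being balanced. First, since $\a^*_m\in\cA_B$, the expected gains $\E^{\a^*_m}[\Delta G^j]$ coincide over $j$, so by \eqref{eq:eX} and \eqref{def:c} one gets $\E^{\bs\phi,\bs\a^*}[\Delta X_m\mid\mathcal F_{m-1}]=c_{\a^*_m}\big(1-\sum_i\phi_m^i\big)\1=0$ for \emph{every} $\bs\phi$, so the first-order term $\nabla U\cdot\E[\Delta X_m]$ vanishes. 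Second, $\1\cdot\nabla U\equiv1$ (inherited from the translation invariance $U(t,x+\lambda\1)=U(t,x)+\lambda$) forces every row of $\nabla^2U$ to sum to zero, which gives the pointwise identity $\sum_{i,j}\pa^2_{ij}U\,\Delta X^i_m\Delta X^j_m=\sum_{i,j}\pa^2_{ij}U\,\Delta G^i_m\Delta G^j_m$ already used in Section~\ref{sec:PDE}; taking its conditional expectation under $\bs\a^*$ and using that $\a^*_m$ is exactly the maximizer defining $H_B$ in \eqref{eq:HB}, the second-order term equals $2H_B(\nabla^2U(t_{m-1},\tilde X_{m-1}))$, again independently of $\bs\phi$. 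Since $U$ solves $\pa_tU+H_B(\nabla^2U)=0$, the $O(1/M)$ part of $\E[Y_m-Y_{m-1}\mid\mathcal F_{m-1}]$ equals $\tfrac1M\big(\pa_tU+H_B(\nabla^2U)\big)=0$, so only a Taylor remainder $\mathrm{Err}_m$ survives.

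It then remains to show $\sum_{m=1}^{M-1}\E|\mathrm{Err}_m|\to0$ and to dispose of the last step. Using $|\Delta X^i_m|\le1$, the three remainders are bounded respectively by $CM^{-2}|\pa^2_{tt}U(t_{m-1},\cdot)|$, $CM^{-3/2}|\pa^2_{tx}U(t_{m-1},\cdot)|$ and $CM^{-3/2}|\pa^3_{xxx}U(t_{m-1},\cdot)|$, which are exactly the three quantities controlled in \eqref{eq:boundder}. Substituting \eqref{eq:boundder} with $1-t_{m-1}=(M-m+1)/M$ turns the corresponding sums into $\frac{C}{\sqrt M}\sum_{j\ge1}j^{-3/2}=O(M^{-1/2})$ and $\frac{C}{\sqrt M}\sum_{j=1}^{M}j^{-1}=O(M^{-1/2}\log M)$, both $o(1)$. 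The single step $m=M$ is the one place this crude bound fails, since \eqref{eq:boundder} degenerates as $t\uparrow1$; I would instead bound $|Y_M-Y_{M-1}|\le|U(1,\tilde X_M)-U(1,\tilde X_{M-1})|+|U(1,\tilde X_{M-1})-U(t_{M-1},\tilde X_{M-1})|$ directly, the first term by $\mathrm{Lip}_x(U)\,|\tilde X_M-\tilde X_{M-1}|\le C/\sqrt M$ (here $U$ is Lipschitz in $x$ uniformly in $t$, since \eqref{eq:pde2} is translation invariant in $x$ and the comparison principle of Lemma~\ref{comp} applied to $\Phi(\cdot+h)$ gives $|U(t,x+h)-U(t,x)|\le\mathrm{Lip}(\Phi)\,|h|$), and the second by the modulus of continuity of the uniformly continuous function $U$ evaluated at $1-t_{M-1}=1/M$, which is $o(1)$. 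Checking that these accumulated remainders stay $o(1)$ after the $M^{-1/2}$ prefactor despite the blow-up of \eqref{eq:boundder} at maturity is the main obstacle; everything else is routine Taylor estimation.

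Combining the two parts gives $\E^{\bs\phi,\bs\a^*}[Y_M\mid X_0=\sqrt M x]\ge U(0,x)-o(1)$ with $o(1)$ independent of $\bs\phi$, hence $\liminf_{M\to\infty}\tfrac1{\sqrt M}\underline V^M(0,\sqrt M x)\ge U(0,x)$, which is \eqref{eq:asymptotic}. Finally, $\bs\a^*$ being one admissible adversary strategy yields $V^M(0,\sqrt M x)\ge\underline V^M(0,\sqrt M x)$, and combining this with \eqref{eq:asymptotic} and the lower bound of Theorem~\ref{thm:viscosity2} gives the displayed chain of inequalities and completes the proof.
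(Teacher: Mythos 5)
Your proof is correct and follows essentially the same route as the paper: telescope $U(t_m,\tilde X_m)$ along the game, use the balancedness of $\a^*_m$ to annihilate the first-order term uniformly in $\bs\phi$, the zero row sums of $\nabla^2 U$ together with the argmax property of $\a^*_m$ to reduce the second-order term to $2H_B(\nabla^2 U)$, the equation \eqref{eq:pde2} to cancel the $O(1/M)$ contribution, and the bounds \eqref{eq:boundder} to show the summed remainders vanish. The only difference is technical rather than conceptual: the paper keeps integral-form Taylor remainders, which remain summable through the final step despite the blow-up of \eqref{eq:boundder} at maturity, whereas you use Lagrange-form bounds for $m\le M-1$ and treat the last step separately via the Lipschitz-in-$x$ and uniform-continuity-in-$t$ estimate for $U$ — both devices close the same gap.
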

\begin{proof}
It can be easily verified that 
\begin{align*}
\frac{1}{\sqrt{M}} \underline{V}^M(0, \sqrt{M}x) - U(0,x)=& \frac{\inf_{\bs{\phi}}\E^{\bs{\phi}, \bs{\a}^*}[\Phi(X_M) \, | \, X_0=\sqrt{M}x  ]}{\sqrt{M}}-U(0,x)\\
=& \inf_{\bs{\phi}}\E^{\bs{\phi}, \bs{\a}^*}[ U(1,\tilde{X}_M)\, | \, \tilde{X}_0=x  ]-U(0,x) \\
=& \inf_{\bs{\phi}} \left( \sum_{m=1}^{M} \E^{\phi,\a_{m}^*}[ U(t_{m}, \tilde{X}_{m})- U(t_{m-1} ,\tilde{X}_{m-1}) \, | \, \tilde{X}_0=x ] \right).
\end{align*}
Note that 
\begin{align}
&\E^{\phi,\a}  \left[U\left(t_m,\tilde{X}_m\right)-U\left(t_{m-1},\tilde{X}_{m-1}\right)\,|\,\tilde{X}_{m-1}=\tilde{x}_{m-1} \right] \label{eq:discrete-1}   \\
&=\E^{\phi,\a} \left[\pa_x U\left(t_{m-1},\tilde{x}_{m-1}\right)^\top\Delta \tilde{X}_{m} \right]  \label{eq:discrete0} \\
&+2\E^{\phi,\a} \left[\int_0^{\sqrt{\frac{1}{M}}} \left(\sqrt{\frac{1}{M}}-s\right)\left(\pa_t U+ \frac{1}{2} \Delta X_m^\top \cdot \pa^2_{xx}U \cdot \Delta X_m\right)(t_{m-1},\tilde{x}_{m-1}+s\Delta X_m) ds \right]  \label{eq:discrete1} \\
& +2\E^{\phi,\a}  \left[\int_0^{\sqrt{\frac{1}{M}}}  \left(\sqrt{\frac{1}{M}}-s\right)\left(\pa_t U(t_{m-1},\tilde{X}_m)-\pa_t U(t_{m-1},\tilde{x}_{m-1}+s\Delta X_m)  \right)  ds \right] \label{eq:discrete2} \\
&+\E^{\phi,\a} \left[\int_0^{\frac{1}{{M}}} \left( \pa_t U(t_{m-1}+s, \tilde{X}_m)-\pa_t U(t_{m-1},\tilde{X}_m)\right)ds \right] \label{eq:discrete3}.
\end{align}

Under the strategy $\a_{m}^*$,  the term \eqref{eq:discrete0} is zero. Since $\pa_{xx}^2 U \cdot \1=0$, the term \eqref{eq:discrete1} is independent of $\phi$.
Due to our choice of $\a_{m}^*$, we have that 
\begin{align*}
\E^{\phi,\a_{m}^*}\left[\left(\pa_t U+ \frac{1}{2} \Delta X_m^\top \cdot \pa^2_{xx}U \cdot \Delta X_m\right)(t_{m-1},\tilde{x}_{m-1})\right]=0.
\end{align*}
According to the derivatives bounds \eqref{eq:boundder}, it can be easily seen that  
\begin{align*}
 & \left(\pa_t U+ \frac{1}{2} \Delta X_m^\top \cdot \pa^2_{xx}U \cdot \Delta X_m\right)(t_{m-1},\tilde{x}_{m-1}+s\Delta X_m)  \\
 & \geq \left(\pa_t U+ \frac{1}{2} \Delta X_m^\top \cdot \pa^2_{xx}U \cdot \Delta X_m\right)(t_{m-1},\tilde{x}_{m-1})-\frac{Cs}{1-t_{m-1}}.
\end{align*}
Therefore the term \eqref{eq:discrete1} is bounded below by 
\begin{align*}
-2C \int_0^{\sqrt{\frac{1}{M}}} \frac{\left(\sqrt{\frac{1}{M}}-s\right)s}{1-t_{m-1}} ds= -\frac{C}{(1-t_{m-1}){M}^{\frac{3}{2}}},
\end{align*}
where $C$ is allowed to change from line to line. Similarly, it can be easily verified that the term \eqref{eq:discrete2}  is bounded below by $-\frac{C}{(1-t_{m-1}){M}^{\frac{3}{2}}}$. As a result of \eqref{eq:boundder}, we have that 
\begin{align*}
\pa_t U(t_{m-1}+s, \tilde{X}_m)-\pa_t U(t_{m-1},\tilde{X}_m) & \geq -C \int_{0}^s \frac{1}{(1-t_{m-1}-w)^{\frac{3}{2}}} dw ,
\end{align*}
and therefore the  term \eqref{eq:discrete3} is bounded below by 
\begin{align*}
-C \int^{\frac{1}{M}}_0\int_{0}^s \frac{1}{(1-t_{m-1}-w)^{\frac{3}{2}}} dw ds &=-C \int^{\frac{1}{M}}_0\int_{w}^{\frac{1}{M}} \frac{1}{(1-t_{m-1}-w)^{\frac{3}{2}}} ds dw \\
&=-C \int_{0}^{\frac{1}{M}} \frac{\frac{1}{M}-s}{(1-t_{m-1}-s)^{\frac{3}{2}}} ds .
\end{align*}

Putting together all the estimates for \eqref{eq:discrete0}, \eqref{eq:discrete1}, \eqref{eq:discrete2} and \eqref{eq:discrete3} above, we conclude that 
\begin{align*}
& \E^{\phi,\a_{m}^*}  \left[U\left(t_m,\tilde{X}_m\right)-U\left(t_{m-1},\tilde{X}_{m-1}\right)\,|\,\tilde{X}_{m-1}=\tilde{x}_{m-1} \right] \\
&\geq -C\left( \frac{1}{(1-t_{m-1}){M}^{\frac{3}{2}}}+\int_{0}^{\frac{1}{M}} \frac{\frac{1}{M}-s}{(1-t_{m-1}-s)^{\frac{3}{2}}} ds\right).
\end{align*}
It can be easily verified that 
\begin{align*}
\lim\limits_{M \to \infty} \sum_{m=1}^M \left( \frac{1}{(1-t_{m-1}){M}^{\frac{3}{2}}}+\int_{0}^{\frac{1}{M}} \frac{\frac{1}{M}-s}{(1-t_{m-1}-s)^{\frac{3}{2}}} ds\right)=0,
\end{align*}
and therefore 
\begin{align*}
\lim\limits_{M \to \infty} \frac{1}{\sqrt{M}} \underline{V}^M(0, \sqrt{M}x) - U(0,x) \geq 0.
\end{align*}
\end{proof}

One might expect that the function $U$ captures important features of the problem, and the algorithm of the forecaster given  by 
$$\phi^*_m=\{\pa_j U (t_{m-1}, \tilde{X}_{m-1} )\}_{j=1}^N$$ yields the best algorithm for the growth of the regret, i.e., an equality holds in \eqref{eq:asymptotic}. 
Such a conjecture holds in \cite{MR4120922,2020arXiv200813703C,kobzar2020new,pmlr-v125-kobzar20a}. Unfortunately, as proved by the following counter example, in our case $\phi^*_m$ does not provide an asymptotic optimal algorithm. 

\vspace{4pt}

{\bf{Counter Example:}} Consider the case $N=2$, $\mu^1=\frac{3}{4}, \mu^2=\frac{1}{4}$ with final condition $\Phi=\Phi_{m,\theta}$. Let $U$ be the viscosity solution to \eqref{eq:pde2}. Then it holds that 
\begin{align*}
\lim\limits_{M \to \infty} \frac{1}{\sqrt{M}} \sup_{\boldsymbol{\a}}\E^{\boldsymbol{\phi}^*, \boldsymbol{\a}}[\Phi(X_M) \, | \, X_0=\sqrt{M}x]>U(0,x),
\end{align*}
i.e., $\phi^*_m$ is not asymptotically optimal. According to Proposition~\ref{prop:special} in the next section, it can be easily verified that \eqref{eq:pde2} becomes 
\begin{align*}
\pa_t U (t,x)+\frac{3}{32}(\pa^2_{11} U(t,x)+\pa^2_{22} U(t,x))=0.
\end{align*}
From $\pa^2_{11}U + \pa^2_{12}U=\pa^2_{12} U+ \pa^2_{22} U =0$, we deduce that $$\pa^2_{11}U=\pa^2_{22} U = - \pa^2_{12} U=-\pa^2_{21} U,$$
and hence 
\begin{align*}
\pa_t U (t,x)+\frac{3}{16}\pa^2_{11} U(t,x)=0.
\end{align*}
By Feynman-Kac representation of $U$, it can be easily verified $\pa_t U \leq 0, \pa^2_{11}U \geq 0$.  By choosing $b^1=a^2=\frac{1}{2}$, we obtain that 
\begin{align*}
&\pa_t U(t,x)+ \frac{1}{2} \max_{\a} \sum_{i,j} \pa_{ij}^2 U(t,x) \E^{\a}[ \Delta G^i \Delta G^j]\\
&=\pa_t U(t,x)+ \frac{3}{8} \pa^2_{11} U(t,x)= -\pa_t U(t,x).
\end{align*}
Therefore we obtain that 
\begin{align*}
\sup_{\a} \E^{\phi_{m}^*,\a}  \left[U \left(t_m,\tilde{X}_m\right)-U \left(t_{m-1},\tilde{X}_{m-1}\right)\,|\,\tilde{X}_{m-1}=\tilde{x}_{m-1} \right]= - \frac{\pa_t U (t_{m-1},\tilde{x}_{m-1})}{M}+o(1/M).
\end{align*}
Due to the explicit formula of $U$, we obtain that 
\begin{align*}
-\pa_t U(1-t,x^1,x^2) \geq c {t}^{-1/2} e^{-\frac{(x^1-x^2)^2}{4dt}}
\end{align*}
for some positive constants $c,d$. To close the argument, we need an estimate of $\tilde{x}^1_{m-1}-\tilde{x}^2_{m-1}$.

Define the strategy $\hat{\bs{\a}}=(\hat{\a}_1, \dotso, \hat{\a}_{M})$ such that 
\begin{align*}
\hat{\a}_m=\{b^1_m=a^2_m=\frac{1}{2}\}. 
\end{align*}
Under the strategy $\hat{\bs{\a}}$, $Z_m:=X^1_{m}-X^2_{m}$ becomes a random walk with
\begin{align*}
\E^{\hat{\bs{\a}}}[Z_m]=\frac{3}{4}, \quad \text{Var}^{\hat{\bs{\a}}}[Z_m]=\frac{3}{16}.
\end{align*}
Therefore, the scaled random walk $(t_{m},\tilde{Z}_m)$ converges to a drifted Brownian motion $(B_t)_{t \geq 0}$ such that 
\begin{align*}
\E[B_t]=\frac{3t}{4}, \quad \text{Var}[B_t]=\frac{3t}{16}. 
\end{align*}
Since $\frac{B_t - \frac{3t}{4}}{\sqrt{\frac{3t}{16}}}$ has standard normal distribution, we define 
\begin{align*}
p:= \P\left[\frac{|B_t - \frac{3t}{4}|}{\sqrt{\frac{3t}{16}}} \leq 1\right] = \P\left[\frac{3t}{4}-\sqrt{\frac{3t}{16}} \leq B_t \leq \frac{3t}{4}+\sqrt{\frac{3t}{16}}\right].
\end{align*}
Therefore, we obtain the estimate 
\begin{align*}
\lim\limits_{M \to \infty}&  \frac{1}{\sqrt{M}}\E^{\boldsymbol{\phi}^*, \hat{\boldsymbol{\a}}}[\Phi(X_M) \, | \, X_0=\sqrt{M}x]-U(0,x) \\&=\lim\limits_{M \to \infty} \sum_{m=1}^{M} \E^{\bs{\phi}^*,\bs{\hat{\a}}}[ U(t_{m}, \tilde{X}_{m})- U(t_{m-1} ,\tilde{X}_{m-1}) \, | \, \tilde{X}_0=0 ]\\
& = \E\left[\int_0^1 c (1-t)^{-1/2} e^{-\frac{B_t^2}{4d(1-t)}}    dt                    \right]  \\
& \geq p \int_0^1 c (1-t)^{-1/2} e^{-\frac{(\frac{3t}{4}+\sqrt{\frac{3t}{16}})^2}{4d(1-t)}} dt >0.
\end{align*}

\vspace{10pt}

 {\bf{Algorithm for the forecaster:}} The decomposition in \eqref{eq:discrete-1} and the identity \eqref{eq:dpp4} show that the algorithm for the forecaster defined as $\phi^*_m=\{\pa_j U (t_{m-1}, \tilde{X}_{m-1} )\}_{j=1}^N$ would set to $0$ the term \eqref{eq:discrete0}. However, even if $u$ solves \eqref{eq:pde} but not \eqref{pde:forsup}, we cannot control the sign of \eqref{eq:discrete1} and there exist strategies for the adversary that renders $u(t_{m-1}, \tilde{X}_{m-1} )$ a submartingale (instead of a supermartingale). Thus, $\bs{\phi}^*=(\phi^*_1,\dotso,\phi^*_M)$ is not the best strategy for the learner and $\nabla u$ does not necessarily provides the best algorithm for the forecaster. 

However, if we assume that $\psi$ is a smooth supersolution to \eqref{pde:forsup}, it can be easily verified that $\phi^*_m=\{\pa_j \psi(t_{m-1}, \tilde{X}_{m-1} )\}_{j=1}^N$
provides an algorithm for the forecaster for which the growth of the regret can be bounded from above as in \eqref{exp:forsup}. 

\subsection{Growth of regret when $\mathcal{A}_B = \emptyset$} 
We now assume that $\mathcal{A}_B = \emptyset$.
In this case, we cannot rely on the PDE \eqref{eq:pde2} to obtain the growth of the regret and we have to introduce some auxiliary functions. Without the Assumption \ref{assume:final} (iii), we provide an example showing that the regret is also of order $\sqrt{M}$. The following result holds no matter $\mathcal{A}_B$ is empty or not. 

\begin{proposition}\label{prop:emptycase1}
Assume that $\Phi(x)=\Phi_m(x)=\max_i x^i$. Then, there exists a function $\hat u:[0,1]\times \R^2\mapsto \R$ solving a linear parabolic non-degenerate PDE with constant coefficients and terminal condition $\hat{u}(1,x)=x^1\vee x^2$ for $x\in \R^2$ so that
\begin{align*}
\liminf\limits_{M \to \infty} \frac{V^M(0,\sqrt{M}x)}{\sqrt{M}} \geq \hat{u}(0,x^1,x^2).
\end{align*}
\end{proposition}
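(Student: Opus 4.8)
The plan is to prove the lower bound by exhibiting one explicit non-adaptive strategy for the adversary that localizes the game to the two experts with the largest success probabilities, forces their expected per-round gains to coincide (so the forecaster is indifferent between them and cannot beat the common value), and leaves a mean-zero random walk whose Gaussian fluctuations generate the $\sqrt M$ term; the function $\hat u$ is then the value of the associated one-dimensional Gaussian problem.

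First, relabel the experts so that $\mu^1\ge\mu^2\ge\cdots\ge\mu^N$ (allowed since $\Phi_m$ is symmetric in its coordinates), set $p_1:=\tfrac{1-\mu^1}{2-\mu^1-\mu^2}$ and $p_2:=1-p_1=\tfrac{1-\mu^2}{2-\mu^1-\mu^2}$, and let $\bs\a_0$ be the strategy that at every round, independently of the past and of the realized gains, corrupts expert $1$ with probability $p_1$ and expert $2$ with probability $p_2$, in each case setting the corrupted gain to $1$ (a fixed element of $\cA$ at each round). From \eqref{eq:eX1} a one-line computation gives $\E^{\bs\a_0}[\Delta G^1]=\E^{\bs\a_0}[\Delta G^2]=c$ with $c:=\mu^1+\tfrac{(1-\mu^1)^2}{2-\mu^1-\mu^2}\ge\mu^1\ge\mu^i$ for all $i$, while $\E^{\bs\a_0}[\Delta G^i]=\mu^i\le c$ for $i\ge3$ since those experts are never corrupted; moreover the increments $S_m:=\Delta G^1_m-\Delta G^2_m$ are i.i.d., bounded, with $\E^{\bs\a_0}[S_m]=0$ and $\sigma^2:=\text{Var}^{\bs\a_0}(S_m)=\tfrac{2(1-\mu^1)(1-\mu^2)}{2-\mu^1-\mu^2}$, which is positive in the non-degenerate regime $\mu^1,\mu^2<1$.

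Next, unwinding the recursion \eqref{def:V} with the adversary fixed to play $\bs\a_0$ at each round yields $V^M(0,\sqrt Mx)\ge\inf_{\bs\phi}\E^{\bs\phi,\bs\a_0}[\Phi_m(X_M)\mid X_0=\sqrt Mx]$. Writing $G^i_M=\sum_{m\le M}\Delta G^i_m$ and $G_M=\sum_{m\le M}\Delta G^{F_m}_m$, so that $X^i_M=\sqrt Mx^i+G^i_M-G_M$, one has $\Phi_m(X_M)\ge X^1_M\vee X^2_M=\max(\sqrt Mx^1+G^1_M,\sqrt Mx^2+G^2_M)-G_M$; and under $\bs\a_0$ the law of $(G^1_M,G^2_M)$ is independent of $\bs\phi$, whereas $\E^{\bs\phi,\bs\a_0}[\Delta G^{F_m}_m\mid\mathcal F_{m-1}]=\sum_i\phi^i_m\,\E^{\bs\a_0}[\Delta G^i]\le c$ gives $\E^{\bs\phi,\bs\a_0}[G_M]\le Mc$ for every adaptive $\bs\phi$. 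Hence $V^M(0,\sqrt Mx)\ge\E^{\bs\a_0}[\max(\sqrt Mx^1+G^1_M,\sqrt Mx^2+G^2_M)]-Mc$, which by $\max(a,b)=\tfrac{a+b}2+\tfrac{|a-b|}2$ and $\E^{\bs\a_0}[G^1_M]=\E^{\bs\a_0}[G^2_M]=Mc$ equals $\tfrac{\sqrt M(x^1+x^2)}2+\tfrac12\,\E^{\bs\a_0}\bigl|\sqrt M(x^1-x^2)+\textstyle\sum_{m\le M}S_m\bigr|$. Dividing by $\sqrt M$ and letting $M\to\infty$, the central limit theorem gives $M^{-1/2}\sum_{m\le M}S_m\Rightarrow\sigma Z$ with $Z\sim\mathcal N(0,1)$, and uniform integrability (the second moments are bounded by $\sigma^2$) upgrades this to convergence of expectations, so $\liminf_M V^M(0,\sqrt Mx)/\sqrt M\ge\tfrac{x^1+x^2}2+\tfrac12\E|(x^1-x^2)+\sigma Z|=:\hat u(0,x^1,x^2)$.

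Finally, define $\hat u(t,y^1,y^2):=\tfrac{y^1+y^2}2+\tfrac12\E|(y^1-y^2)+\sigma\sqrt{1-t}\,Z|$; then $\hat u(1,y)=\tfrac{y^1+y^2}2+\tfrac{|y^1-y^2|}2=y^1\vee y^2$, and writing $\hat u(t,y)=\tfrac{y^1+y^2}2+\tfrac12v(t,y^1-y^2)$ the profile $v(t,z)=\E|z+\sigma\sqrt{1-t}\,Z|$ solves the one-dimensional heat equation $\pa_t v+\tfrac{\sigma^2}2\pa_{zz}^2 v=0$, $v(1,z)=|z|$, so $\hat u$ is the constant-coefficient (non-degenerate for $\sigma^2>0$) parabolic solution of $\pa_t\hat u+\tfrac{\sigma^2}8(\pa_{11}^2\hat u-2\pa_{12}^2\hat u+\pa_{22}^2\hat u)=0$ with terminal data $y^1\vee y^2$; moreover $\hat u(0,x)>x^1\vee x^2$ whenever $\sigma>0$ by Jensen's inequality, so the regret is genuinely of order $\sqrt M$. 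The only non-routine step is the construction above: the strategy must be balanced on the two experts that matter — otherwise $\inf_{\bs\phi}$ could shrink the $\max$ term — and at the same time have balanced value $c=\max_i\E^{\bs\a_0}[\Delta G^i]$ — otherwise the forecaster could push $\E[G_M]$ above $Mc$ — and corrupting the two most accurate experts toward gain $1$ is exactly what makes both requirements hold simultaneously. Everything else (the reduction, in which the sole $\bs\phi$-dependent quantity enters with a favorable sign and is controlled uniformly, and the CLT together with uniform integrability) is standard; the one delicate leftover is the case $\mu^2=1$ (two or more perfect experts), where $\sigma=0$, the limiting equation degenerates, and the statement must be interpreted accordingly.
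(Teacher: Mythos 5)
Your proof is correct in the main regime and takes a genuinely different route from the paper. The paper bounds $\Phi_m$ below by $x^1\vee x^2$, asserts that the resulting game coincides with an auxiliary two-expert game, and then invokes its general machinery: Theorem~\ref{thm:viscosity2} (supersolution property of the lower relaxed limit plus the comparison result of Lemma~\ref{comp}) together with Proposition~\ref{prop:special} and Lemma~\ref{lem:posdef} to identify a constant optimal balanced strategy, whence a linear non-degenerate PDE for $\hat u$. You instead fix one explicit, constant, non-adaptive adversary strategy that is balanced on the two most accurate experts, decouple the forecaster through $\mathbb{E}[G_M]\le Mc$ (using $c\ge\max_i\mu^i$), and pass to the limit by the CLT with uniform integrability, so that $\hat u$ appears as an explicit Gaussian expectation. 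Your route is more elementary and self-contained (no viscosity theory, no comparison principle), and your sorting of the experts together with the requirement $c\ge\max_i\mu^i$ handles a point the paper's reduction glosses over: in the $N$-expert game with payoff $x^1\vee x^2$ the forecaster may follow a third, more accurate expert, so identifying the game with a literal two-expert game requires exactly the domination your construction enforces. The price is that your bound is naturally stated for the coordinates of the two most accurate experts rather than literally for $x^1,x^2$, and that your specific (maximal-$c$, ``generous'') balanced strategy need not give the best constant, which is harmless since the statement only asks for some $\hat u$.

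Two small repairs. First, the two-dimensional operator you display, $\partial_t+\tfrac{\sigma^2}{8}(\partial_{11}-2\partial_{12}+\partial_{22})$, has a rank-one diffusion matrix and is therefore degenerate as a PDE in $(y^1,y^2)$; but your $\hat u$ is exactly the solution of the genuinely non-degenerate heat equation $\partial_t\hat u+\tfrac{\sigma^2}{4}\Delta\hat u=0$ with terminal data $y^1\vee y^2$ (its Hessian annihilates $\1$, so only the quantity $A_{11}-2A_{12}+A_{22}$ matters), so the conclusion stands after replacing the operator. Second, your construction degenerates as soon as $\mu^1=1$ (a single perfect expert already forces $\sigma=0$), not only when $\mu^2=1$; this is consistent with the fact that the paper's own argument implicitly needs $\mu^1,\mu^2\in(0,1)$ through Lemma~\ref{lem:posdef} and Proposition~\ref{prop:special}, so flagging the non-degenerate regime explicitly, as you do, is appropriate.
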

\begin{proof}
Denote $\tilde \Phi(x)=x^1\vee x^2$ for all $x\in \R^N$ so that 
$\Phi(x)\geq \tilde \Phi(x)$. 
Now consider a second game with final condition $\tilde \Phi$.
We denote its value function by $\tilde{V}^M$. It is then clear that $V^M(m,x) \geq \tilde{V}^M(m,x)$ for all $x\in \R^N$ and $0\leq m\leq M$. 

The final condition of the second game only depends on the first two components of the state. Thus, 
$\tilde V^M(m,x)=\hat V^M(m,x^1,x^2)$ where $\hat V^M$ is the value of an auxiliary two-expert game. 
Thanks to the Proposition \ref{prop:existbalanced}, the game with two experts always admits balanced strategies. 

Thanks to Theorem \ref{thm:viscosity2}, 
\begin{align*}
\liminf\limits_{M \to \infty} \frac{\hat V^M(tM,\sqrt{M}x^1,\sqrt{M}x^2)}{\sqrt{M}} \geq \hat{u}(t,x^1,x^2).
\end{align*}
where $\hat u$ solves
\begin{align}\label{eq:pde3}
0&=\partial_t \hat u(t,x)+ \hat H_B(\nabla^2 \hat u(t,x)).
\end{align}
with final condition $\hat u(1,x)=x^1\vee x^2$ and the generator $\hat H_B$ is associated to the balanced strategies to the auxiliary two-expert game. 
Thanks to Proposition \ref{prop:special} (which will be proved independently of this Proposition), the optimizer in \eqref{eq:pde3} is associated to a constant strategy so that $\hat u$ in fact solves a linear non-degenerate PDE which concludes the proof. 

\end{proof}

\begin{remark}
Note that due to the non-degeneracy of the PDE solved by $\hat u$, we easily have that $\pa_i\hat u(t,x)>0$ for all $t\in [0,1)$. Therefore, $\hat u$ is a smooth solution of \eqref{eq:pde} and for the auxiliary two-expert game, $ \frac{1}{\sqrt{M}} \hat V^M \left(\lceil Mt \rceil, \sqrt{M}x \right)$ indeed converges to $\hat u(t,x)$. 

\end{remark}
The following Proposition shows the importance of  Assumption \ref{assume:final} (iii). For any $\Phi$ satisfying Assumption \ref{assume:final} (iii), we will show that $\Phi(x)\leq \Phi(0)+\Phi_{m,\theta}(x)$.  Therefore the forecaster is partially satisfied when he does better than the average of the experts. Since no balanced strategies exist, the forecaster can do better than the average by following the best performed expert at each round, and thus the scaled value function tends to $-\infty$.

\begin{proposition}\label{prop:emptycase2}
Assume that the terminal condition $\Phi$ satisfies Assumption \ref{assume:final}. Then, we obtain that 
\begin{align*}
\limsup \limits_{M \to \infty} \frac{1}{\sqrt{M}}V^M(0,\sqrt{M} x) = -\infty. 
\end{align*}
\end{proposition}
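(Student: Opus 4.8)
The plan is to use Assumption~\ref{assume:final}~(iii) to dominate the terminal payoff $\Phi$ by the explicit function $\Phi_{m,\theta}$, and then to show that when $\cA_B=\emptyset$ an online-learning forecaster beats the experts' average by an amount linear in $M$, forcing the scaled value to $-\infty$.

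First I would record the pointwise bound $\Phi(x)\le\Phi(0)+\Phi_{m,\theta}(x)$. Setting $g(x):=\Phi(x)-\tfrac{\theta}{N}\,x\cdot\1$, Assumption~\ref{assume:final}~(iii) says precisely that $g$ is nondecreasing in each coordinate, so evaluating at $x\le(\max_i x^i)\1$ and using the normalization $\Phi(\lambda\1)=\Phi(0)+\lambda$ of part~(ii) gives $g(x)\le g((\max_i x^i)\1)=\Phi(0)+(1-\theta)\max_i x^i$, which is the claim. Writing $X_M=\sqrt M x+(G^1_M-G_M,\dots,G^N_M-G_M)$ and using subadditivity of $x\mapsto\max_i x^i$, this turns into the pathwise estimate
\begin{align*}
\Phi(X_M)\le \Phi(0)+\sqrt M\,\Phi_{m,\theta}(x)+(1-\theta)\Big(\max_i G^i_M-G_M\Big)-\theta\Big(G_M-\tfrac1N\textstyle\sum_i G^i_M\Big),
\end{align*}
so it is enough to find a forecaster strategy making the last term of order $M$ in expectation while keeping $\max_i G^i_M-G_M$ of order $\sqrt{M\ln N}$.

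The core of the proof is to extract from $\cA_B=\emptyset$ a uniform round-by-round gap. Let $\mathcal C:=\{(\E^{\a}[\Delta G^i])_{i=1}^N:\a\in\cA\}$; by \eqref{eq:eX1} it is the image of the convex compact set $\cA$ under an affine map, hence convex and compact, and it meets the diagonal $\{t\1:t\in\R\}$ iff $\cA_B\neq\emptyset$. Since $\cA_B=\emptyset$, the continuous nonnegative function $c\mapsto\max_i c^i-\tfrac1N\sum_j c^j$ is strictly positive on $\mathcal C$, hence bounded below there by some $\rho>0$. Because the per-round conditional mean-gain vector $d_k:=(\E[\Delta G^i_k\mid\mathcal F_{k-1}])_i$ is the affine image of $\E[\a_k]\in\cA$, it lies in $\mathcal C$, and so does its time average; this forces $\max_i\E[G^i_M]-\E[\tfrac1N\sum_j G^j_M]\ge\rho M$ for \emph{every} adversary strategy, i.e. some fixed expert beats the average by a linear amount. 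The forecaster then plays exponential weights on the state, $\phi^i_m\propto e^{\eta X^i_{m-1}}$ with $\eta=\sqrt{8\ln N/M}$; since $X^i_{m-1}=G^i_{m-1}-G_{m-1}$ the factor $e^{-\eta G_{m-1}}$ cancels, so this is exactly Hedge on the observed cumulative gains $(G^i_{m-1})_i$, hence a Markovian feedback strategy admissible in \eqref{def:V}, with the classical guarantee $\E[G_M]\ge\E[\max_i G^i_M]-\sqrt{(M/2)\ln N}$ against any adversary. Iterating \eqref{def:V} downward along this strategy and combining with the two displayed bounds gives $V^M(0,\sqrt M x)\le\Phi(0)+\sqrt M\,\Phi_{m,\theta}(x)+\sqrt{(M/2)\ln N}-\theta\rho M$ uniformly in the adversary; dividing by $\sqrt M$ and letting $M\to\infty$ then yields $\lim_{M\to\infty}\frac1{\sqrt M}V^M(0,\sqrt M x)=-\infty$, which is stronger than the asserted $\limsup$.

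I expect the main obstacle to be the step converting $\cA_B=\emptyset$ into a linear gain: an adaptive adversary could try to rotate which expert is currently ahead, so that no \emph{fixed} expert beats the average, and the point that defeats this is precisely the convexity of the set $\mathcal C$ of attainable mean-gain vectors — the time average of the per-round vectors is again in $\mathcal C$ and hence bounded away from the diagonal, so a single-round non-balanceability condition upgrades to an $\Omega(M)$ advantage, after which online learning lets the forecaster track the best fixed expert up to $O(\sqrt{M\ln N})$. A minor point is verifying that exponential weights fits the Markovian dynamic-programming framework of \eqref{def:V}; this holds because on the state variable it reduces to Hedge on the cumulative gains and therefore depends only on past realized gains, not on the forecaster's own past choices.
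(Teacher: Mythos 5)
Your proposal is correct, and after the shared first step it follows a genuinely different route from the paper. The opening reduction is identical: you derive $\Phi(x)\le\Phi(0)+\Phi_{m,\theta}(x)$ from Assumption~\ref{assume:final}, exactly as in \eqref{eq:inequalityg}. From there the paper fixes an arbitrary adversary strategy $\hat{\bs\a}$, observes that the forecaster's best response is to follow the expert with maximal conditional mean (so the same response is optimal for the $\Phi_m$ part and the average part simultaneously), controls the $\Phi_m$ part by comparison with the fully adversarial game via Remark~\ref{rmk:upperbound} (i.e.\ the cited asymptotic result behind \eqref{eq:negativeinf2}), and controls the average part through the gap $\delta=\inf_{\a\in\cA}(M(\a)-m(\a))>0$ in \eqref{eq:negativeinf3}. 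You instead exhibit one explicit forecaster — exponential weights on the state, which as you note coincides with Hedge on the cumulative gains and is therefore Markovian and admissible in \eqref{def:V} — and use its nonasymptotic guarantee $\E[G_M]\ge\E[\max_i G_M^i]-\sqrt{(M/2)\ln N}$, valid uniformly over adaptive adversaries; and you replace the paper's gap $\delta$ by $\rho=\min_{c\in\mathcal C}\bigl(\max_i c^i-\tfrac1N\sum_j c^j\bigr)>0$, where $\mathcal C$ is the convex compact set of attainable mean-gain vectors, so that convexity upgrades the single-round non-balanceability to $\max_i\E[G^i_M]-\tfrac1N\sum_j\E[G^j_M]\ge\rho M$ for every adversary. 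What each approach buys: yours is self-contained (no appeal to the limit theorem of Remark~\ref{rmk:upperbound}, whose uniformity in $\hat{\bs\a}$ the paper asserts without detail) and gives an explicit finite-$M$ bound $V^M(0,\sqrt M x)\le\Phi(0)+\sqrt M\,\Phi_{m,\theta}(x)+O(\sqrt{M\ln N})-\theta\rho M$, hence a true limit $-\infty$; moreover your $\rho$ comes from a continuous function on a compact set, whereas the positivity of the paper's $\delta$ involves the second-largest mean $m(\a)$, which is not continuous in $\a$ and is only asserted to be positive. The paper's argument, on the other hand, avoids importing any online-learning machinery and stays entirely within its dynamic-programming formalism, identifying the structure of the forecaster's best response, which is of independent interest. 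One small presentational point: when combining the two estimates you implicitly use $\E[\max_i G^i_M]\ge\max_i\E[G^i_M]$; it is trivial, but worth stating since it is the hinge between the Hedge bound and the $\rho M$ gap.
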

\begin{proof}
Recall that $\Phi_m(x)=\max_i x^i$.
For any $x \in \mathbb{R}^N$, due to Assumption \ref{assume:final} (i), (ii), it follows that 
\begin{align*}
\Phi(x) \leq \Phi( \Phi_m(x) \1)= \Phi(0)+ \Phi_m(x). 
\end{align*}
And by Assumption \ref{assume:final} (iii), we obtain that 
\begin{align*}
\Phi(0)+ \Phi_m(x)-\Phi(x)=\Phi(\Phi_m(x) \1)-\Phi(x) \geq \frac{\theta}{N} (\Phi_m(x)\1 - x) \cdot \1,
\end{align*}
and therefore
\begin{align}\label{eq:inequalityg}
\Phi(x) & \leq \Phi(0)+\Phi_m(x)- \frac{\theta}{N} (\Phi_m(x)\1 - x) \cdot \1\notag \\
& =\Phi(0)+(1-\theta) \Phi_m(x)+ \frac{\theta}{N} x \cdot \1  =: \Phi(0)+\Phi_{m,\theta}(x). 
\end{align}

Since $V^M(0, \sqrt{M}x) \leq V^M(0,0)+\sqrt{M}  \Phi(x)$, it suffices to prove that 
\begin{align*}
\limsup \limits_{M \to \infty} \frac{1}{\sqrt{M}}V^M(0,0) = -\infty. 
\end{align*}
Denote $\boldsymbol{\phi}:= (\phi_1, \dotso, \phi_{M})$ the sequence of strategies of the forecaster, and $\boldsymbol{\a}:=(\a_1,\dotso, \a_{M})$ the sequence of strategies of the adversary.  Due to \eqref{eq:inequalityg}, we obtain that
\begin{align}
V^M(0,0)&= \inf_{\boldsymbol{\phi}} \sup_{\boldsymbol{\a}} \mathbb{E}^{\boldsymbol{\phi},\boldsymbol{\a}} [\Phi(X_M) \, | \, X_0=0]  \notag \\
& \leq \Phi(0)+ \inf_{\boldsymbol{\phi}} \sup_{\boldsymbol{\a}} \mathbb{E}^{\boldsymbol{\phi},\boldsymbol{\a}}[\Phi_{m,\theta}(X_M) \, | \, X_0=0].
\end{align}
For any $\a \in \mathcal{A}$, we define $$M(\alpha)= \max_{i} \mathbb{E}^{\a} [\Delta G^i],$$ and $$m(\alpha)=\max_{i}\{  \mathbb{E}^{\a}[ \Delta G^i]:  \mathbb{E}^{\a}[ \Delta G^i]< M(\alpha)\}.$$
Here $M(\a)$ is the largest expected expert gain under the policy $\a$, and $m(\a)$ is the second largest expected gain. Since $\mathcal{A}_B =\emptyset$, for any $\a \in \mathcal{A}$ we have that $m(\alpha) \geq 0$ and $M(\alpha)-m(\alpha) >0$. Define 
\begin{align*}
\delta:= \inf_{a \in \mathcal{A}} (M(\a)-m(\a)).
\end{align*}
It can be easily seen that $\delta >0$. 

For any value function $V$ \eqref{def:V} with terminal condition $\Phi$ \eqref{def:terminal} satisfying $\Phi(x+\lambda \1)=\Phi(x)+\lambda$, it holds that $V^M(m+1, x+ \lambda \1)= V^M(t,x)+\lambda$. It can be easily verified that
\begin{align*}
V^M(m-1,x)&=\min_{\phi_m}\max_{\a_m} \E^{\phi_m,\a_m}[V^M(m, x+\Delta X_m)] \\
&=\max_{\a_m} \E^{\a_m}[V^M(m,x+\Delta G_m)]- \max_{\phi_m} \sum_{i} \phi_m^i \E^{\a_m}[ \Delta G_m^i].
\end{align*}
Therefore for any fixed strategy $\hat{\boldsymbol{\a}}$ of the adversary, the optimal response of the forecaster is to follow the experts with maximal expected gain under policy $\hat{\boldsymbol{\a}}$ at each round, i.e., $\phi^i_m=0$ if and only if $\mathbb{E}^{\hat{\a}_m}[\Delta G^i] < M(\hat{\a}_m)$. Denote one such optimal response of the forecaster by $\hat{\bs{\phi}}$. 
Therefore we obtain that 
\begin{align}\label{eq:negativeinf1}
 \inf_{\boldsymbol{\phi}} &\mathbb{E}^{{\boldsymbol{\phi}},\hat{ \boldsymbol{\a}}}[\Phi_{m,\theta}(X_M) \, | \, X_0=0] = \mathbb{E}^{\hat{\boldsymbol{\phi}},\hat{ \boldsymbol{\a}}}[\Phi_{m,\theta}(X_M) \, | \, X_0=0] \notag\\
 & = (1-\theta)  \mathbb{E}^{\hat{\boldsymbol{\phi}},\hat{ \boldsymbol{\a}}}[\Phi_m(X_M) \, | \, X_0=0]+\frac{\theta}{N} \mathbb{E}^{\hat{\boldsymbol{\phi}},\hat{ \boldsymbol{\a}}}[ {X}_M \cdot \1  \, | \, X_0=0] \notag \\
& = (1-\theta)  \inf_{\boldsymbol{\phi}} \mathbb{E}^{\boldsymbol{\phi},\hat{ \boldsymbol{\a}}}[\Phi_m(X_M) \, | \, X_0=0]+\frac{\theta}{N} \inf_{\boldsymbol{\phi}} \mathbb{E}^{\boldsymbol{\phi},\hat{ \boldsymbol{\a}}}[ {X}_M \cdot \1  \, | \, X_0=0].
\end{align}
According to Remark~\ref{rmk:upperbound}, there exists some positive $C$ independent of choice of  $\hat{\bs{\a}}$ such that
\begin{align}\label{eq:negativeinf2}
\limsup\limits_{M \to \infty} \frac{1}{\sqrt{M}}\inf_{\boldsymbol{\phi}} \mathbb{E}^{\boldsymbol{\phi},\hat{\boldsymbol{\a}}}[\Phi(X_M) \, | \, X_0=0] \leq C. 
\end{align}
Due to our definition of $\delta$, we obtain that for any $\hat{\boldsymbol{\a}}$
\begin{align}\label{eq:negativeinf3}
\inf_{\boldsymbol{\phi}} \mathbb{E}^{\boldsymbol{\phi},\hat{\boldsymbol{\a}}}[ {X}_M \cdot \1/N \, | \, X_0=0] \leq -\delta M/N. 
\end{align}
In conjunction with \eqref{eq:negativeinf1},\eqref{eq:negativeinf2} and \eqref{eq:negativeinf3}, we conclude that 
\begin{align*}
\limsup \limits_{M \to \infty} \frac{1}{\sqrt{M}}V^M(0,0) = -\infty. 
\end{align*}

\end{proof}

\section{Explicit solutions in some special cases}\label{sec:special}
In this section we exhibit some cases where the value function and the strategies of the adversary can be explicitly computed. The results are valid for final condition 
$\Phi_{\theta,m}(x)=(1-\theta)\Phi(x)+\frac{\theta}{N} \sum_{i=1}^N x^i $ for any fixed $\theta \geq 0$. 

Our methodology is to provide a stochastic representation for the solution of \eqref{eq:pde2} that allows us to claim that this solution also solves \eqref{eq:pde}. Then, we use this solution to obtain a strategy for the adversary. 
\begin{definition}
Any $\bar{\alpha}\in \mathcal{A}_B$ satisfying 
$$c_{\bar{\alpha}}=\sup_{\alpha\in \mathcal{A}_B}c_\alpha$$ is called a generous adversary and 
any $\underline{\alpha}\in \mathcal{A}_B$ satisfying 
$$c_{\underline{\alpha}}=\inf_{\alpha\in \mathcal{A}_B}c_\alpha$$ is called a greedy adversary. 
\end{definition}
Note that for $\a \in \cA_B$, one has 
\begin{align}\label{eq:cont1}
\sum_{i,j=1}^N S_{ij} \E^{\a}[\Delta G^i \Delta G^j]=c_\alpha Tr\left(\Sigma_1 S\right)-Tr\left(\Sigma_2 S\right)
\end{align}
where
$\Sigma_1=\{\mu^i+\mu^j\}_{i,j=1}^N+diag(1-2\mu^1,\ldots,1-2\mu^N)$ and $\Sigma_2=\{ \mathbbm{1}_{\{i \not = j\}}\mu^i\mu^j \}_{i,j=1}^N$. 

The next lemma shows that the linear differential operator associated to each balanced strategy is non-degenerate. 

\begin{lemma}\label{lem:posdef}
If $N\geq 2$, and $ 0<\mu^i <1, i =1, \dotso, N$, then for any $\a \in \mathcal{A}_B$ (if this set is not empty)  the matrix $c_{\a} \Sigma_1- \Sigma_2$ is positive definite.
\end{lemma}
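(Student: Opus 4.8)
The plan is to represent the quadratic form $v^{\top}(c_{\alpha}\Sigma_1-\Sigma_2)v$ probabilistically and then exploit the non-degeneracy of the experts' Bernoulli gains together with the balance constraint. Fix $\alpha\in\mathcal{A}_B$ and an arbitrary $v\in\R^N$. Applying the identity \eqref{eq:cont1} to the rank-one symmetric matrix $S=vv^{\top}$ gives
\[
v^{\top}(c_{\alpha}\Sigma_1-\Sigma_2)v=\sum_{i,j=1}^N v_i v_j\,\E^{\alpha}[\Delta G^i\Delta G^j]=\E^{\alpha}\!\left[\Big(\sum_{i=1}^N v_i\Delta G^i\Big)^{2}\right]\ \ge\ 0,
\]
so $c_{\alpha}\Sigma_1-\Sigma_2$ is automatically positive semidefinite. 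It therefore suffices to show that $\E^{\alpha}\big[(\sum_i v_i\Delta G^i)^2\big]=0$, i.e. $\sum_i v_i\Delta G^i=0$ almost surely under $\alpha$, forces $v=0$.

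I would prove this coordinate by coordinate. Fix $j$. Suppose first that $\P^{\alpha}(A=k)>0$ for some $k\ne j$. Since $\E^{\alpha}[(\sum_i v_i\Delta G^i)^2]=0$ and the summand is nonnegative, $\sum_i v_i\Delta G^i=0$ a.s. on the positive-probability event $\{A=k\}$; on that event $\Delta G^j=g^j$ is Bernoulli$(\mu^j)$ with $0<\mu^j<1$ and independent of $W_j:=\sum_{\ell\ne j}v_\ell\Delta G^\ell$, so evaluating $v_jg^j+W_j=0$ at the two values of $g^j$ (each attained with positive conditional probability) yields $W_j=0$ and then $v_j=0$. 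In the complementary situation $\P^{\alpha}(A=k)=0$ for all $k\ne j$, hence $\P^{\alpha}(A=j)=1$ and $a^j+b^j=1$; the previous paragraph then applies to every index $\ell\ne j$ (with the witness $k=j$), giving $v_\ell=0$ for all $\ell\ne j$, so that $\sum_i v_i\Delta G^i=v_jh$ a.s. and $v_j^2\,b^j=\E^{\alpha}[(v_jh)^2]=0$. Finally $b^j>0$: if $b^j=0$ then $\E^{\alpha}[\Delta G^j]=b^j=0$, so by the balance relation \eqref{def:c} $c_{\alpha}=0$, while for any $\ell\ne j$ (such $\ell$ exists because $N\ge 2$) we have $A\ne\ell$ a.s. and thus $\E^{\alpha}[\Delta G^\ell]=\mu^\ell>0$, contradicting $\E^{\alpha}[\Delta G^\ell]=c_{\alpha}$. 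Hence $v_j=0$, and since $j$ was arbitrary, $v=0$, which gives positive definiteness.

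The genuinely delicate point — and the only place where the hypotheses $\mu^i>0$ and balancedness are used in an essential way — is the degenerate regime in which the adversary corrupts a single expert with probability one: there the rank-one representation alone is blind to that coordinate, and one needs $\mu^\ell>0$ for the remaining experts together with the balance equation to exclude a nonzero $v$ concentrated on the corrupted coordinate. Everything else is routine, resting only on the fact that each $g^i$ is a non-degenerate Bernoulli variable independent of the adversary's control.
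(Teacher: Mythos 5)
Your proof is correct. It shares with the paper the first (essentially forced) step: writing $v^\top(c_\alpha\Sigma_1-\Sigma_2)v=\E^{\alpha}[(v^\top\Delta G)^2]\geq 0$ and reducing positive definiteness to the claim that $v^\top\Delta G=0$ $\P^{\alpha}$-a.s. forces $v=0$. Where you diverge is in how that non-degeneracy is established. The paper argues combinatorially about the support of $\Delta G$: it splits on whether some $b^i>0$ or all $b^i=0$ (so some $a^i>0$), exhibits explicit atoms $\hat z$, resp.\ $\tilde z$, of positive probability (this is where $0<\mu^j<1$ enters, through the factors $p_j$), concludes the support contains $\{z:z^i=1\}$, resp.\ $\{z:z^i=0\}$ plus one more point supplied by balancedness and $c_\alpha>0$, and hence spans $\R^N$. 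You instead argue coordinate-by-coordinate via conditioning on the identity of the corrupted expert: if some $k\neq j$ is corrupted with positive probability, the conditional independence and non-degeneracy of the Bernoulli gain $g^j$ kill $v_j$ directly; the degenerate case for you is $\P^{\alpha}(A=j)=1$, where balance together with $\mu^\ell>0$ ($N\geq2$) forces $b^j>0$ and hence $v_j=0$. The two case splits are genuinely different (yours is "adversary fixates on one expert" rather than "no $b^i$ is positive"), but both invoke the same ingredients ($0<\mu^i<1$, independence of the gains from the corruption, and the balance relation in the degenerate case). Your version is more probabilistic and avoids computing the span of the support; the paper's version is more explicit about which realizations of $\Delta G$ occur, which is mildly more informative (it identifies the support up to a half-cube). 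Either argument suffices for the lemma; just make sure, when you "evaluate at the two values of $g^j$", to state that you are using the conditional independence of $g^j$ and $W_j$ given $\{A=k\}$ so that the conclusion $W_j=0$ obtained on $\{g^j=0\}$ transfers to $\{g^j=1\}$ — your sentence implies this, but it is the one point a referee would ask you to spell out.
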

\begin{proof}
It suffices to show that for any vector $y=(y^1, \dotso, y^N) \not = 0$,
\begin{align}\label{eq:posdef}
y (c_{\a}\Sigma_1-\Sigma_2) y^\top= \sum_{i,j=1}^N y^iy^j \E^{\a}[\Delta G^i \Delta G^j]=\E^{\a}\left[\left(y^\top \Delta G \right)^2 \right]>0.
\end{align}
As a result of \eqref{eq:posdef}, $y (c_{\a}\Sigma_1-\Sigma_2) y^\top=0$ if and only if $y^\top \Delta G=0$\, $\mathbb{P}^{\a}$-a.s. Denote the collection of all the possible realizations of $\Delta G$ by 
\begin{align*}
O=\{ z \in \{0,1\}^N: \, \mathbb{P}^{\a}[\Delta G=z]>0 \}.
\end{align*}
We will prove that  the dimension of the linear expansion $\langle  O \rangle$ is $N$. Then it follows that $y^\top \Delta G=0$\, $\mathbb{P}^{\a}$-a.s. is impossible, and hence $ y (c_{\a}\Sigma_1-\Sigma_2) y^\top>0$. 

Recall that $b^i$ is the probability that the adversary corrupts expert $i$ and sets his gain to $1$. Suppose there exists some $b^i >0$. For any $z \in \{0,1\}^N$, we define $ \hat{z}= z- z^i e^i+e^i$, and $p_j=\mathbbm{1}_{\{z^j=0\}}(1-\mu^j)+\mathbbm{1}_{\{z^j=1\}} \mu^j $ for any $j \not = i$. Note that the $i$-th coordinate of $\hat{z}$ is always $1$. It can be easily seen that 
\begin{align*}
\P(\Delta G= \hat{z}) & \geq \P(\Delta G=\hat{z}, \text{the adversary corrupts expert $i$ and sets his gain to $1$})\\
& \geq b^i \prod_{j=1, j \not = i}^N p_i >0.
\end{align*}
Therefore $\hat{z} \in O$ for any $z$, and $O \supset \{ z \in \{0,1\}^N: z^i=1\}$. Hence the dimension of $\langle O \rangle$ is $N$. 

Recall that $a^i$ is the probability that the adversary corrupts expert $i$ and sets his gain to $0$. If $b^i=0, i=1, \dotso, N$, there must exist some $a^i>0$. For any $z \in \{0,1\}^N$, we define $\tilde{z}=z-z^i e^i $. Note that the $i$-coordinate of $\tilde{z}$ is always $0$. Since $\P(\Delta G= \tilde{z}) \geq a^i \prod_{j=1, j \not= i}^N p_i>0$, we obtain that $\tilde{z} \in O$ for any $z$, and hence $O \supset \{z \in \{0,1\}^N: z^i=0 \}$. Since $N c_{\a} = \sum_{j=1}^N((1-a^j-b^j)\mu^j+ b^j)>0$ for $N \geq 2$, it must hold that $\E^{\a}[\Delta G^i]>0$. There must exist some $z \in O$ such that $z^i=1$. Hence $\langle O \rangle $ is of dimension $N$.
\end{proof}

The next proposition provides a condition on $\{\mu^i\}$ that allows us to characterize the optimal strategy of the adversary as a constant strategy. Therefore, in this case, the optimal strategy of the adversary is the maximizer of the Hamiltonian in \eqref{eq:pde2}.

\begin{proposition}\label{prop:special}
If \eqref{eq:existbalanced} holds, $\mu^i \in (0,1)$ for any $i$  and $\mu^i+\mu^j \leq 1$ (resp. $\mu^i+\mu^j \geq 1$) for any $i \not = j$, then any generous adversary $\overline \a$ (resp. greedy adversary $\underline \a$) is the maximizer of the Hamiltonian in \eqref{eq:pde} for all $(t,x)\in [0,1)\times \R^N$.

Additionally, the solution $u$ to \eqref{eq:pde}  is equal to $U$ (which is the solution to \eqref{eq:pde2}),  and satisfies \eqref{eq:boundder}. Therefore according to Proposition~\ref{prop:asymptotic}, the asymptotic strategy $\bs{\a}^*$ in \eqref{eq:asymptoticadversary} for the adversary guarantees $U$ as a lower bound of regret. 
\end{proposition}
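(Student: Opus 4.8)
The plan is to exploit the representation \eqref{eq:cont1}, which shows that for $\alpha \in \mathcal{A}_B$ the quadratic form $\sum_{i,j} S_{ij}\E^\alpha[\Delta G^i\Delta G^j]$ depends on $\alpha$ only through the scalar $c_\alpha$, via the affine expression $c_\alpha \mathrm{Tr}(\Sigma_1 S) - \mathrm{Tr}(\Sigma_2 S)$. Consequently, for a fixed $S$, maximizing over $\mathcal{A}_B$ amounts to choosing $c_\alpha$ as large as possible when $\mathrm{Tr}(\Sigma_1 S) > 0$ and as small as possible when $\mathrm{Tr}(\Sigma_1 S) < 0$; these extremal choices are exactly the generous adversary $\overline\alpha$ and the greedy adversary $\underline\alpha$. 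So the first step is to determine the sign of $\mathrm{Tr}(\Sigma_1 S)$ when $S = \nabla^2 U(t,x)$. Here I would use that $\1\cdot\nabla U = 1$ forces $\nabla^2 U \cdot \1 = 0$, hence $\mathrm{Tr}(\{\mu^i+\mu^j\}_{i,j}\,S) = 2\sum_i \mu^i (S\1)_i = 0$, which collapses $\mathrm{Tr}(\Sigma_1 S)$ to $\mathrm{Tr}(\mathrm{diag}(1-2\mu^i)\,S) = \sum_i (1-2\mu^i) S_{ii}$. The condition $\mu^i + \mu^j \le 1$ for $i\ne j$ should be leveraged, together with $\nabla^2 U \succeq 0$ (nonnegativity of the Hessian of $U$, which itself follows because $U$ solves a linear non-degenerate heat-type equation with convex-in-the-relevant-sense terminal data, or more robustly from the fact that a balanced-strategy operator is a maximum of heat operators and $U$ inherits a comparison/monotonicity structure), to pin down the sign.

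\textbf{Carrying it out.} Concretely, once the optimal $\alpha$ is identified as the constant strategy $\overline\alpha$ (resp. $\underline\alpha$) independent of $(t,x)$, equation \eqref{eq:pde2} becomes the \emph{linear} constant-coefficient parabolic equation $0 = \partial_t U + \tfrac12\,\mathrm{Tr}\big((c_{\overline\alpha}\Sigma_1 - \Sigma_2)\nabla^2 U\big)$, whose operator is non-degenerate by Lemma~\ref{lem:posdef}. Then $U$ admits a Feynman--Kac representation $U(t,x) = \E[\Phi(x + \sqrt{1-t}\,\Sigma^{1/2} Z)]$ (up to the drift-free structure enforced by $\1\cdot\nabla U = 1$ and translation invariance), from which the derivative bounds \eqref{eq:boundder} follow by standard Gaussian smoothing estimates: each extra $t$-derivative or pair of $x$-derivatives on the heat kernel costs a factor $(1-t)^{-1/2}$, and Lipschitz-ness of $\Phi$ absorbs one derivative for free, giving the $t^{-3/2}$ and $t^{-1}$ rates stated. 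Next I would verify that this $U$ is also a solution of the \emph{full} equation \eqref{eq:pde}: since the constant optimizer lies in $\mathcal{A}(\nabla U)$ whenever $\nabla U \in (0,\infty)^N$ — and non-degeneracy of the linear equation plus the increasing terminal condition gives $\partial_i U > 0$ on $[0,1)$ — the Hamiltonian $H(\nabla U, \nabla^2 U)$ coincides with $H_B(\nabla^2 U)$ there, so $U$ solves \eqref{eq:pde} classically. Finally, having \eqref{eq:boundder}, Proposition~\ref{prop:asymptotic} applies verbatim to conclude that $\bs\alpha^*$ guarantees $U$ as a lower bound of regret.

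\textbf{Main obstacle.} The delicate point is the sign analysis of $\mathrm{Tr}(\Sigma_1 \nabla^2 U)$ and, underlying it, establishing $\nabla^2 U \succeq 0$ (or at least that its diagonal entries weighted by $1-2\mu^i$ have a definite sign) without circular reasoning — one wants to conclude the operator is linear, but a priori one only knows $U$ solves the max-of-linear equation \eqref{eq:pde2}. I expect the cleanest route is to first argue directly that the generous (resp. greedy) strategy is optimal pointwise by a convexity/ordering argument on $c_\alpha \mapsto c_\alpha \mathrm{Tr}(\Sigma_1 S) - \mathrm{Tr}(\Sigma_2 S)$ combined with an a priori sign for $\mathrm{Tr}(\Sigma_1 \nabla^2 v)$ valid for \emph{any} subsolution/supersolution (using only $\1\cdot\nabla v = 1$ and a one-sided second-difference inequality coming from the terminal condition propagated by the dynamic programming principle \eqref{eq:sdpp}), and only then identify the resulting linear equation and invoke Feynman--Kac. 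The case split $\mu^i+\mu^j \le 1$ versus $\ge 1$ presumably flips the sign of $\sum_i(1-2\mu^i)S_{ii}$ relative to the off-diagonal contribution, and handling the boundary case $\mu^i+\mu^j = 1$ (where the choice of $c_\alpha$ is irrelevant and any balanced strategy works) should be noted separately but is harmless.
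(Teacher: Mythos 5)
Your reduction via \eqref{eq:cont1} to the sign of $\mathrm{Tr}(\Sigma_1 S)$, together with the observation that $S\1=0$ collapses this to $\sum_i(1-2\mu^i)S_{ii}$, is exactly the right first step and coincides with the paper's strategy. The gap is in the justification of that sign. You propose to deduce it from $\nabla^2 U\succeq 0$, $\nabla^2U\,\1=0$ and the condition $\mu^i+\mu^j\ge 1$, but this implication is false at the matrix level: take $N=3$, $\mu=(2/5,3/5,3/5)$ (which satisfies \eqref{eq:existbalanced} and $\mu^i+\mu^j\ge1$ for $i\neq j$) and $S=vv^\top$ with $v=(2,-1,-1)$; then $S\succeq0$ and $S\1=0$, yet $\sum_i(1-2\mu^i)S_{ii}=\tfrac15(4-1-1)=\tfrac25>0$. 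So convexity of $U$ plus the gradient-simplex constraint cannot pin down the sign of $\mathrm{Tr}(\Sigma_1\nabla^2U)$, and your fallback in the ``main obstacle'' paragraph (an a priori sign valid for any sub/supersolution using only $\1\cdot\nabla v=1$ and a one-sided second-difference inequality) would have to exclude precisely such Hessians, which it cannot do without exploiting the specific terminal condition.

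The paper closes this gap differently, and the order of operations is what dissolves the circularity you flagged. It first constructs, by induction on the number of experts, a smooth approximation $\Phi_\e$ of $\Phi_{m,\theta}$ satisfying the pointwise inequality $\mathrm{Tr}(\Sigma_1\partial^2\Phi_\e)\le 0$; this uses the special structure of the max function (gradients in the simplex and the ordering inequality \eqref{eq:induction3} driven by $\mu^N=\max_i\mu^i$), not mere convexity. It then \emph{defines} the candidate $u$ (and $u^\e$) as the solution of the linear constant-coefficient equation with matrix $\underline\Sigma=c_{\underline\a}\Sigma_1-\Sigma_2$, notes that $w^\e=\mathrm{Tr}(\Sigma_1\partial^2u^\e)$ solves the same equation, and propagates the terminal sign by the maximum principle; letting $\e\to0$ gives $\mathrm{Tr}(\Sigma_1\partial^2u)\le0$, so the greedy adversary maximizes the affine map $c_\a\mapsto c_\a\,\mathrm{Tr}(\Sigma_1\partial^2u)-\mathrm{Tr}(\Sigma_2\partial^2u)$ pointwise, whence $u$ solves \eqref{eq:pde2} and equals $U$, and $\pa_iu>0$ from the Gaussian representation upgrades this to \eqref{eq:pde}. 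The remaining parts of your plan (Feynman--Kac representation and heat-kernel estimates for \eqref{eq:boundder}, positivity of $\pa_iU$, invoking Proposition~\ref{prop:asymptotic}) do match the paper, but without the terminal-condition inequality and its propagation the central claim that the generous/greedy adversary is the pointwise maximizer remains unproven.
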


\begin{proof}
Note that according to Proposition~\ref{prop:existbalanced}, $\mathcal{A}_B \not = \emptyset$ if and only if \eqref{eq:existbalanced} holds. We only provide the proof for the case where $\mu^i+\mu^j \geq 1$ for any $i \not = j$. The other case can be proved similarly. Our methodology is to show that the solution of the linear PDE associated with the constant strategy $\underline \a$ also provides a solution to \eqref{eq:pde2} and \eqref{eq:pde}.

{\it Step 1: Approximating the final condition: } Using the definition of $\Phi_{m,\theta}$ and the assumption on $\{\mu^i\}$, we first find an approximation sequence $\Phi_{\epsilon}$ such that $\Phi_{\epsilon}$ converges to $\Phi_{m,\theta}$ in $\mathcal{L}^{\infty}$ as $\epsilon \to 0$ and $Tr(\Sigma_1  \pa^2 \Phi_{\epsilon}(x) ) \leq 0$. Since
$$\Phi_{m,\theta}(x)=(1-\theta)\max\{x^1,\dotso,x^N\}+\frac{\theta}{N} \sum_{i=1}^N x^i=(1-\theta)\Phi_{m}(x)+\frac{\theta}{N} \sum_{i=1}^N x^i $$ 
and the second derivative of the linear part is $0$, it is sufficient to prove the claim for $\theta=0$. 
We prove the existence of such $\Phi_{\epsilon}$ by induction. 

First we approximate the absolute value function on $\mathbb{R}^1$. For each $\epsilon >0$, it can be easily verified that there exists some $f_{\epsilon}:\mathbb{R}^1 \to \mathbb{R}^1$ such that 
\begin{enumerate}[(i)]
\item $f_{\epsilon}(x)=|x|$ if $|x| \geq \epsilon$;
\item $f_{\epsilon}$ is convex;
\item $|x| \leq f_{\epsilon} \leq |x|+\epsilon, \, \forall x \in \mathbb{R}^1$.
\end{enumerate}

Then in the case of $N=2$, we define 
\begin{align*}
\Phi^2_{\epsilon}(x^1,x^2):= \frac{x^1+x^2+f_{\epsilon}(x^1-x^2) }{2}.
\end{align*}
It can be easily seen that $\Phi^2_{\epsilon}$ converges to $\Phi^2$ in $\mathcal{L}^{\infty}$, and $\pa_1 \Phi^2_{\e}+\pa_2 \Phi^2_{\e}=1$. We compute the second derivative of $\Phi^2_{\epsilon}$ and obtain that 
\begin{align*}
&\pa^2_{11} \Phi^2_{\e}(x)= \pa^2_{22} \Phi^2_{\e}(x)=\frac{1}{2} f_{\e}''(x^1-x^2), \\
& \pa^2_{12} \Phi^2_{\e}(x)= \pa^2_{12} \Phi^2_{\e}(x)=-\frac{1}{2} f_{\e}''(x^1-x^2).
\end{align*}
Since $f_{\e}$ is convex, we have $f_{\e}''(x^1-x^2) \geq 0$, and therefore 
\begin{align*}
Tr(\Sigma_1 \pa^2 \Phi^2_{\e}(x))= (\mu^1+\mu^2 -1) f''_{\e}(x^1-x^2) \leq 0. 
\end{align*}

Suppose our claim is correct for $N-1$ many experts, let us prove it for $N$. Without loss of generality, we assume that $\mu^N= \max\{\mu^1, \dotso, \mu^N\}$. Denote by $\tilde{x}$ the first $N-1$ components of $x$, and by $\tilde{\Sigma}_1$ the principal submatrix of $\Sigma_1$ by removing its last row and column. By induction, we have $\Phi^{N-1}_{\e}$ such that 
\begin{enumerate}[(i)]
\item $\sum_{i=1}^{N-1} \pa_{i} \Phi^{N-1}_{\e}=1$ and $\pa_i \Phi^{N-1}_{\e} \geq 0, \, \forall i \leq N-1$; 
\item $\Phi^{N-1}_{\e} \to \Phi^{N-1}$ in $\mathcal{L}^{\infty}$ as $\e \to 0$;
\item $Tr(\tilde{\Sigma}_1 \pa^2 \Phi^{N-1}_{\e}) \leq 0$. 
\end{enumerate}
Define 
\begin{align*}
\Phi_{\e}(x):=\frac{\Phi^{N-1}_{\e}(\tilde{x})+x^N+f_{\e}(\Phi^{N-1}_{\e}(\tilde{x})-x^N )}{2}.
\end{align*}
It is then clear that $\Phi_{\e} \to \Phi$ in $\mathcal{L}^{\infty}$. To simplify notation, we omit the arguments $x, \tilde{x}$ when it is clear from the context. Let us compute its first derivatives 
\begin{align*}
&\pa_i \Phi_{\e} = \frac{1}{2} \pa_i \Phi^{N-1}_{\e}+ \frac{1}{2} f'_{\e}(\Phi^{N-1}_{\e}-x^N) \pa_i \Phi^{N-1}_{\e}, \ i \leq N-1, \\
& \pa_N \Phi_{\e} =\frac{1}{2} - \frac{1}{2}f'_{\e}(\Phi^{N-1}_{\e}-x^N), 
\end{align*}
and  second derivatives
\begin{align*}
&\pa^2_{ij} \Phi_{\e}=\frac{1}{2} \left(1 +f'_{\e}(\Phi^{N-1}_{\e}-x^N) \right) \pa^2_{ij} \Phi^{N-1}_{\e}+ \frac{1}{2} f''_{\e}(\Phi^{N-1}_{\e}-x^N) \pa_i \Phi^{N-1}_{\e} \pa_j \Phi^{N-1}_{\e}, \ i,j \leq N-1, \\
& \pa^2_{iN} \Phi_{\e}= -\frac{1}{2} f''_{\e}(\Phi^{N-1}_{\e}-x^N) \pa_i \Phi^{N-1}_{\e}, \ i \leq N, \\
& \pa^2_{NN} \Phi_{\e}=\frac{1}{2}f''_{\e}(\Phi^{N-1}_{\e}-x^N). 
\end{align*}

Due to  $\sum_{i=1}^{N-1} \pa_{i} \Phi^{N-1}_{\e}=1$ and $1+f_{\e}',1-f_{\e}' \geq 0$, we obtain that $\pa_i \Phi_{\e} \geq 0$ and 
\begin{align}\label{eq:induction1}
\sum_{i=1}^{N} \pa_{i} \Phi^{N}_{\e}=1.
\end{align}
Denote by $\widetilde{\pa^2 \Phi^{N}_{\e} }$ the principal submatrix of $\pa^2 \Phi^{N}_{\e}$ by removing the last row and column. We rewrite the trace as 
\begin{align*}
Tr( \Sigma_1 \pa^2 \Phi^{N}_{\e}) =& Tr( \tilde{\Sigma}_1 \widetilde{\pa^2 \Phi^{N}_{\e} })+ 2 \sum_{i=1}^{N-1} (\mu^i+\mu^N) \pa^2_{iN} \Phi_{\e} + \pa^2_{NN} \Phi_{\e} \\
 =& \frac{1}{2}\left(1 +f'_{\e}(\Phi^{N-1}_{\e}-x^N) \right) Tr(\tilde{\Sigma}_1 \pa^2 \Phi^{N-1}_{\e}) \\
+&\frac{1}{2} f_{\e}''(\Phi^{N-1}_{\e}-x^N) \left((\pa \Phi^{N-1}_{\e})^\top  \cdot \tilde{\Sigma}_1 \cdot \pa \Phi^{N-1}_{\e}-2 \sum_{i=1}^{N-1}(\mu^i+\mu^N) \pa_i \Phi^{N-1}_{\e} +1\right).
\end{align*}
According to our induction, we know that $Tr(\tilde{\Sigma}_1 \pa^2 \Phi^{N-1}_{\e}) \leq 0$. Since $\left(1 +f'_{\e}(\Phi^{N-1}_{\e}-x^N) \right) \geq 0, \, f_{\e}''(\Phi^{N-1}_{\e}-x^N) \geq 0$, it suffices to show that 
\begin{align}\label{eq:induction2}
 \left((\pa \Phi^{N-1}_{\e})^\top  \cdot \tilde{\Sigma}_1 \cdot \pa \Phi^{N-1}_{\e}-2 \sum_{i=1}^{N-1}(\mu^i+\mu^N) \pa_i \Phi^{N-1}_{\e} +1\right) \leq 0.
\end{align}

Due to the equality $\sum_{i=1}^{N-1} \pa_{i} \Phi^{N-1}_{\e}=1$, we obtain that 
\begin{align*}
(\pa \Phi^{N-1}_{\e})^\top  \cdot \tilde{\Sigma}_1 \cdot \pa \Phi^{N-1}_{\e}=&(\sum_{i=1}^{N-1} \pa_{i} \Phi^{N-1}_{\e} )^2+ \sum_{i \not = j \leq N-1 } (\mu^i+\mu^j-1) \pa_i \Phi^{N-1}_{\e}  \pa_j \Phi^{N-1}_{\e} \\
=&1+\sum_{i \not = j \leq N-1 } (\mu^i+\mu^j-1) \pa_i \Phi^{N-1}_{\e}  \pa_j \Phi^{N-1}_{\e}. 
\end{align*}
Similarly, we have that 
\begin{align*}
2 \sum_{i=1}^{N-1}(\mu^i+\mu^N) \pa_i \Phi^{N-1}_{\e}=2+\sum_{i=1}^{N-1}(\mu^i+\mu^N-1) \pa_i \Phi^{N-1}_{\e}. 
\end{align*}
Therefore \eqref{eq:induction2} is equivalent to that 
\begin{align}\label{eq:induction3}
\sum_{i \not = j \leq N-1 } (\mu^i+\mu^j-1) \pa_i \Phi^{N-1}_{\e}  \pa_j \Phi^{N-1}_{\e} \leq \sum_{i=1}^{N-1}(\mu^i+\mu^N-1) \pa_i \Phi^{N-1}_{\e}.
\end{align}
For fixed $i \leq N-1$, according to our assumption $\mu^N=\max\{\mu^1, \dotso, \mu^N\}$ we have that 
\begin{align*}
\sum_{j \leq N-1, j \not = i} (\mu^i+\mu^j-1)\pa_i \Phi^{N-1}_{\e}  \pa_j \Phi^{N-1}_{\e} & \leq  (\mu^i+\mu^N-1) \pa_i \Phi^{N-1}_{\e} \sum_{j \leq N-1, j \not = i} \pa_j \Phi^{N-1}_{\e} \\
& \leq (\mu^i+\mu^N-1) \pa_i \Phi^{N-1}_{\e}. 
\end{align*}
Summing from $i=1$ to $i=N-1$, we obtain the inequality \eqref{eq:induction3}, and hence \eqref{eq:induction2}. In conjunction with \eqref{eq:induction1}, we finish proving the induction.

{\it Step 2: Solving the nonlinear PDE with the linear PDE:} Denote by $u$ the solution of 
$$0=\partial_t u(t,x) +\frac{1}{2} Tr\left(\underline \Sigma \pa_{xx}^2 u(t,x)\right)$$ with terminal condition $\Phi_{\theta,m} $, and by $ u^\e$ the solution of
\begin{align}\label{eq:pdeapprox}
0=\partial_t u^\e(t,x) +\frac{1}{2} Tr\left(\underline \Sigma \pa_{xx}^2 u^\e(t,x)\right)
\end{align}
with terminal condition $\Phi_{\e} $(smooth approximation of $\Phi_{\theta,m} $ satisfying $Tr(\Sigma_1  \pa^2 \Phi_{\epsilon}(x) ) \leq 0$). In order to show that $\underline{\alpha}$ is optimal, it suffices to prove that $u$ solves PDE \eqref{eq:pde}. To show this solution property, it is sufficient to show that $u$ solves \eqref{eq:pde2},
$\pa_i u(t,x)>0$ for all $(t,x)\in [0,1)\times \R^N$, and $Tr(\Sigma_1 \pa_{xx}^2 u(t,x))\leq 0$.

We can differentiate \eqref{eq:pdeapprox} in $x$ twice to obtain that
$$w^\e(t,x)=Tr(\Sigma_1 \pa_{xx}^2 u^\e(t,x))$$ solves the same PDE 
$$0=\partial_t w^\e(t,x) +\frac{1}{2} Tr\left(\underline \Sigma \pa_{xx}^2 w^\e(t,x)\right)$$
with final condition 
$$w^\e(1,x)=Tr(\Sigma_1  \pa_{xx}^2 \Phi_\e(x)).$$
Due to the choice of $\Phi_\e$ we have that that $w^\e(1,x)\leq 0$. 

Thus, by the maximum principle, $w^\e(t,x)\leq 0$ for all $(\e,t,x)\in (0,1)\times [0,1]\times \R^N$. 
Fix $t\in [0,1)$. Due the Malliavin calculus representation of $\pa_{xx}^2 v^\e$,
$$\pa_{xx}^2 u^\e(t,x)=\E\left[\Phi_\e(x+\sqrt{\underline \Sigma} (W_{1-t}))\sqrt{\underline \Sigma}^{-1}\frac{W_{1-t}W^\top_{1-t}-(1-t)I_N}{(1-t)^2} \sqrt{\underline \Sigma}^{-1}\right]$$
and we have that $\pa_{xx}^2 u^\e(t,x)\to \pa_{xx}^2 u(t,x).$ This implies that for all $t\in [0,1) $ and $x\in \R^N$, 
$$Tr(\Sigma_1 \pa_{xx}^2 u(t,x))\leq 0.$$
Thus, thanks to \eqref{eq:cont1}
\begin{align*}
2H_B( \pa_{xx}^2 u(t,x))&=\sup_{\a\in \cA_B}\sum_{i,j=1}^N \pa_{ij}^2 u(t,x) \E^{\a}[\Delta G^i \Delta G^j] \\
&=\sup_{\a\in \cA_B}c_\alpha Tr\left(\Sigma_1  \pa_{xx}^2 u(t,x)\right)-Tr\left(\Sigma_2  \pa_{xx}^2 u(t,x)\right)\\
&=c_{\underline\alpha} Tr\left(\Sigma_1  \pa_{xx}^2 u(t,x)\right)-Tr\left(\Sigma_2  \pa_{xx}^2 u(t,x)\right),
\end{align*}
and therefore $\underline \a$ is optimal among balanced strategies and $u$ solves \eqref{eq:pde2}  and is therefore equal to $U$.

Note also that using the the density of the Brownian motion one can show that 
\begin{align}\label{eq:derivsol}
\pa_{i}u\left(t,x\right)=(1-\theta)\P\left(i\mbox{th coordinate of }(x+\sqrt{\underline \Sigma} W_{1-t}) \mbox{ is maximal}\right)+\frac{\theta}{N}>0
\end{align}
for all $(t,x)\in [0,1)\times \R^N$ and $\theta \geq 0$. Thus, $u$ also solves \eqref{eq:pde}.

{\it Step 3: The derivatives of $u$ satisfies \eqref{eq:boundder}.}
According to Lemma~\ref{lem:posdef} and Proposition~\ref{prop:special}, the coefficient matrix $\Sigma$ of the optimal adversary is positive definite. Then there exists some matrix $P=(P_1, \dotso, P_N)$ with $\det P>0$ such that $\underline\Sigma=P^\top P$. It can be easily verified that the solution is given by 
\begin{align*}
u(1-t,x)=\frac{\det P} { (2\pi t)^{N/2}} \int_{\mathbb{R}^N} e^{-\frac{|Py|^2}{2t} } {\Phi}_{m,\theta}(x-y) \, dy.
\end{align*}
Note that ${\Phi}_{m,\theta}(x)= (1-\theta) \max\{x^1,\dotso,x^N\}+\frac{\theta}{N} \sum x^i$ is differentiable almost everywhere and 
\begin{align*}
\pa_{i} {\Phi}_{m,\theta} (x-y)=(1-\theta) \mathbbm{1}_{\{x^i-y^i \geq \Phi_m(x-y) \} } +\frac{\theta}{N}\quad  \text{ a.e.}
\end{align*}
Therefore we obtain that 
\begin{align}
\pa_i u(1-t,x)& = \frac{\theta}{N}+\frac{(1-\theta)\det P}{(2\pi t)^{N/2}} \int_{\mathbb{R}^N} e^{-\frac{|Py|^2}{2t} }  \mathbbm{1}_{\{x^i-y^i \geq {\Phi}_m(x-y) \} } \, dy \notag \\
& = \frac{\theta}{N}+\frac{(1-\theta)\det P}{(2\pi t)^{N/2}} \int_{\mathbb{R}^N} e^{-\frac{|Px-Py|^2}{2t} }  \mathbbm{1}_{\{y^i \geq \Phi_m(y) \} } \, dy.
\end{align}
Differentiating the above equation with respect to $x^j$, it follows that 
\begin{align}
\pa^2_{ij} u(1-t,x)= -\frac{(1-\theta)\det P}{(2\pi t)^{N/2}} \int_{\mathbb{R}^N} e^{-\frac{|Px-Py|^2}{2t} } \left(\frac{P_j^\top P (x-y)}{t} \right) \mathbbm{1}_{\{y^i \geq \Phi_m(y) \} } \, dy.
\end{align}\label{eq:secorder}
Similarly, it can be easily seen that
\begin{align}
\pa^3_{ijk}& u(1-t,x) \notag \\
=&-\frac{(1-\theta)\det P}{(2\pi t)^{N/2}} \int_{\mathbb{R}^N} e^{-\frac{|Px-Py|^2}{2t} } \left(\frac{P_j^\top P_k}{t} \right) \mathbbm{1}_{\{y^i \geq \Phi_m(y) \} } \, dy \notag \\
&+\frac{(1-\theta)\det P}{(2\pi t)^{N/2}} \int_{\mathbb{R}^N} e^{-\frac{|Px-Py|^2}{2t} } \left(\frac{P_j^\top P(x-y)}{t} \right)\left(\frac{P_k^\top P(x-y)}{t} \right)\mathbbm{1}_{\{y^i \geq \Phi_m(y) \} } \, dy, \notag
\end{align}
and also
\begin{align} 
\pa^4_{ijkl}& u(1-t,x)  \notag \\
=&\frac{(1-\theta)\det P}{(2\pi t)^{N/2}} \int_{\mathbb{R}^N} e^{-\frac{|Px-Py|^2}{2t} } \left(\frac{P_j^\top P_k}{t} \right)\left(\frac{P_l^\top P (x-y)}{t} \right) \mathbbm{1}_{\{y^i \geq \Phi_m(y) \} } \, dy \notag \\
&+\frac{(1-\theta)\det P}{(2\pi t)^{N/2}} \int_{\mathbb{R}^N} e^{-\frac{|Px-Py|^2}{2t} } \left(\frac{P_j^\top P(x-y)}{t} \right)\left(\frac{P_k^\top P_l}{t} \right) \mathbbm{1}_{\{y^i \geq \Phi_m(y) \} } \, dy \notag \\
&+\frac{(1-\theta)\det P}{(2\pi t)^{N/2}} \int_{\mathbb{R}^N} e^{-\frac{|Px-Py|^2}{2t} } \left(\frac{P_j^\top P_l}{t} \right)\left(\frac{P_k^\top P(x-y)}{t} \right) \mathbbm{1}_{\{y^i \geq \Phi_m(y) \} } \, dy \notag \\
&-\frac{(1-\theta)\det P}{(2\pi t)^{N/2}} \int_{\mathbb{R}^N} e^{-\frac{|Px-Py|^2}{2t} } \left(\frac{P_j^\top P(x-y)}{t} \right)\left(\frac{P_k^\top P(x-y)}{t} \right)\left(\frac{P_l^\top P (x-y)}{t} \right) \mathbbm{1}_{\{y^i \geq \Phi_m(y) \} } \, dy. \notag
\end{align}

Let us show that there exists a constant $C$ such that for any $x \in\mathbb{R}^N$
\begin{align*}
|\pa_{ij}^2 u(1-t,x) | \leq \frac{C}{\sqrt{t}}.
\end{align*}
It can be easily seen that 
\begin{align*}
|\pa^2_{ij} u(1-t,x)|=& \left| -\frac{(1-\theta)\det P}{(2\pi t)^{N/2}} \int_{\mathbb{R}^N} e^{-\frac{|Py|^2}{2t} } \left(\frac{P_j^\top P y}{t} \right) \mathbbm{1}_{\{x^i-y^i \geq \Phi_m(x-y) \} } \, dy \right| \\
= & \left| -\frac{(1-\theta)\det P}{(2\pi)^{N/2}} \int_{\mathbb{R}^N} e^{-\frac{|Py|^2}{2} } \left(\frac{P_j^\top P y}{\sqrt{t}} \right) \mathbbm{1}_{\left\{\frac{x^i}{\sqrt{t}}-y^i \geq \Phi_m\left(\frac{x}{\sqrt{t}}-y\right) \right\} } \, dy \right| \\
\leq & \frac{(1-\theta)\det P}{(2\pi)^{N/2} \sqrt{t}} \int_{\mathbb{R}^N}e^{-\frac{|Py|^2}{2} } |P_j^\top P y| \, dy =\frac{C}{\sqrt{t}}.
\end{align*}
Similarly, it can be proved that 
\begin{align*}
|\pa^3_{ijk} u(1-t,x)| \leq \frac{C}{t},
\end{align*}
and 
\begin{align*}
|\pa^4_{ijkl} u(1-t,x)| \leq \frac{C}{t^{\frac{3}{2}}}.
\end{align*}

Now let us estimate $\pa^2_{tt}u$ and $\pa_{tx}^2 u$. Since $\pa_t u= - \frac{1}{2} Tr( \Sigma \pa_{xx}^2 u )$, we obtain that 
\begin{align*}
|\pa_{tt}^2 u (1-t,x)|&=\left|\frac{1}{2} \pa_t Tr( \Sigma \pa_{xx}^2 u(1-t,x) )\right| =\left|\frac{1}{2} Tr(\Sigma \pa_{xx}^2 \pa_{t} u(1-t,x) ) \right| \\
&= \left|\frac{1}{2} Tr\left(\Sigma \pa_{xx}^2 \left(\frac{1}{2} Tr(\Sigma \pa^2_{xx} u(1-t,x)) \right) \right) \right|.
\end{align*}
The right hand of the above equation is a linear combination of $\pa_{ijkl}^4 u(1-t,x)$, and hence 
\begin{align*}
|\pa_{tt}^2 u (1-t,x)| \leq \frac{C}{{t}^{\frac{3}{2}}}, \quad \text{ $\forall x \in \mathbb{R}^N$.}
\end{align*}
Similarly, it can be easily verified that 
\begin{align*}
|\pa_{xxx}^3 u(1-t,x)| \leq \frac{C}{t}, \quad |\pa_{tx}^2 u (1-t,x)| \leq \frac{C}{t}, \quad \text{ $\forall x \in \mathbb{R}^N$},
\end{align*}
which concludes the proof of \eqref{eq:boundder}.

\end{proof}

\begin{example}
In the special case of $\mu^i=\mu, i=1, \dotso,N$, the equation becomes 
\begin{align*}
0=\partial_t U(t,x)+ \frac{1}{2} \max_{\alpha\in \mathcal{A}_B}\left( \sum_{i,j=1}^N (2\mu c_{\alpha}-\mu^2) \pa_{ij}^2 U(t,x)  + \sum_{i=1}^N (c_{\alpha}- 2\mu c_{\alpha}+\mu^2) \pa^2_{ii} U(t,x)\right).
\end{align*}
Since $\1 \cdot \nabla U=1$, we have $\sum_{i,j=1}^N (2\mu c_{\alpha}-\mu^2) \pa_{ij}^2 U(t,x)=0.$ Hence the equation can be simplified as 
\begin{align}\label{eq:laplace}
0=\partial_t U(t,x)+ \frac{1}{2} \max_{\alpha\in \mathcal{A}_B}  \left( (c_{\alpha}- 2\mu c_{\alpha}+\mu^2) \Delta U(t,x) \right).
\end{align}
Note that 
\[
C:=\frac{1}{2}\max_{\alpha\in \mathcal{A}_B}  (c_{\alpha}- 2\mu c_{\alpha}+\mu^2)=
\begin{cases}
\frac{1}{2}(1-2\mu)(\mu+\frac{1}{N}(1-\mu))+\frac{1}{2}\mu^2 & \text{ if $\mu \leq 1/2$ }, \\
\frac{1}{2}(1-2\mu)(\mu-\frac{1}{N}\mu)+\frac{1}{2}\mu^2 & \text{ if $\mu \geq 1/2$ }.
\end{cases}
\]
 It can be easily verified that $C \geq 0$. We claim that the solution of \eqref{eq:laplace} is the solution of the following equation, 
\begin{align}\label{eq:simplesol}
0=\partial_t U(t,x)+ C \Delta U(t,x),
\end{align}
additionally the asymptotic strategy $\bs{\a}^*$ in \eqref{eq:asymptoticadversary} for the adversary guarantees $U$ as a lower bound of regret. 
\end{example}

\section{Conclusion}
{In this paper, we study an expert prediction problem, where an adversary only corrupts one expert at each round and a forecaster makes predictions based on experts' past gains. The forecaster aims at minimizing his regret, while the adversary wants to maximize it. Therefore this problem can be interpreted as a zero-sum game between the adversary and the forecaster.  Using viscosity theory tools in the field of partial differential equation, we provided the growth rate of regret for the forecaster. A strategy of the adversary is called balanced if the expected gain of each expert is the same under this strategy.  We showed that the growth rate of regret  fundamentally depends on whether balanced strategies  exist and whether the final condition $\Phi$ satisfies the strictly monotone condition Assumption~\ref{assume:final} (iii),
\begin{enumerate}[(i)]
\item \emph{Balanced strategies exist, $\Phi$ does not satisfy Assumption~\ref{assume:final} (iii):} the growth rate of regret is bounded below by the solution of \eqref{eq:pde2}; see Theorem~\ref{thm:viscosity2}. 
\item \emph{Balanced strategies exist, $\Phi$ satisfies Assumption~\ref{assume:final} (iii):} the growth rate of regret is given by the solution of \eqref{eq:pde2}; see Theorem~\ref{thm:viscosity1}. 
\item \emph{Balanced strategies do not exist, $\Phi$ does not satisfy Assumption~\ref{assume:final} (iii):} the asymptotic regret is of order $\sqrt{M}$; see Proposition~\ref{prop:emptycase1}. 
\item \emph{Balanced strategies do not exist, $\Phi$ satisfies Assumption~\ref{assume:final} (iii):} the asymptotic regret is $-\infty$; see Proposition~\ref{prop:emptycase2}.
\end{enumerate}
Also, we designed asymptotic optimal strategies for the adversary in Proposition~\ref{prop:asymptotic}, and solved \eqref{eq:pde}, \eqref{eq:pde2} in some special cases; see Proposition~\ref{prop:special}. 

}

\acks{E. Bayraktar is supported in part by the National Science Foundation and in part by the Susan M. Smith Chair. I. Ekren is supported in part by NSF Grant DMS 2007826. }


\appendix
\section*{Appendix A.}
\begin{proof}[Proof of Proposition \ref{prop:existbalanced}]
Suppose $\mathcal{A}_B \not = \emptyset$, and $\alpha \in \mathcal{A}_B$. Then according to \eqref{def:c}, we have that $c_{\alpha}= (1-a^j-b^j)\mu^j+b^j, \, \forall j $. If $c_{\alpha} \geq \mu^j$, we have that $$c_{\alpha}-\mu^j= -(a^j+b^j) \mu^j+b^j \leq (1-\mu^j)(a^j+b^j),$$
which is equivalent to that $ \frac{c_{\alpha}-\mu^j }{1-\mu^j} \leq a^j+b^j$. If $ \mu^j \geq c_{\alpha} $, we obtain that 
$$\mu^j-c_{\alpha}=(a^j+b^j) \mu^j- b^j \leq (a^j+b^j)\mu^j,$$
and hence $ \frac{\mu^j-c_{\alpha}}{\mu_j} \leq a^j+b^j $. Since $\sum_{j=1}^N (a^j+b^j)=1$, we get that 
\begin{align*}
\inf_{c \in [0,1]} \sum_{i=1}^N \left(\frac{\mu^i-c}{\mu^i} \vee \frac{c-\mu^i}{1-\mu^i} \right) \leq \sum_{j=1}^N  \left(\frac{\mu^j-c_{\alpha}}{\mu^j} \vee \frac{c_{\alpha}-\mu^j}{1-\mu^j} \right) \leq  \sum_{j=1}^N (a^j+b^j) =1. 
\end{align*}

For the converse, suppose there exists some $c \in [0,1]$ such that 
 $\sum_{j=1}^N \left(\frac{\mu^j-c}{\mu^j} \vee \frac{c-\mu^j}{1-\mu^j} \right) \leq 1$. Denote $s:=\sum_{j=1}^N \left(\frac{\mu^j-c}{\mu^j} \vee \frac{c-\mu^j}{1-\mu^j} \right)$. For each $j=1,\dotso,N$,  if $c \geq \mu^j$, we take  $$a^j=(c-\mu^j)\left(\frac{1}{s}-s\right), \quad b^j=(c-\mu^j)\left(1+\frac{\mu^j}{s(1-\mu^j)}\right),$$
 and if $\mu^j \geq c$, 
 $$a^j=(\mu^j-c) \left(\frac{1-\mu^j}{s\mu^j}+1\right), \quad b^j=(\mu^j-c)\left(\frac{1}{s}-1\right).$$
 It can be easily verified that $a^j, b^j\in [0,1]$, $\sum_{j=1}^N (a^j+b^j)=1$, and $(a^j,b^j)$ satisfies \eqref{def:c}. Therefore, it is a balanced strategies. 
\end{proof}
\begin{proof}[Proof of Lemma \ref{lem:lineargrowth}]
For any $\epsilon>0$, there exists a mollifier $\eta$ such that 
\begin{align*}
| \eta * \Phi- \Phi|_{\infty} < \epsilon.
\end{align*}
Define $\tilde{\Phi}:= \eta * \Phi$, and it suffices for us to show that
\begin{align}\label{eq:lineargrowth}
 |u^M(t, \cdot)- \tilde{\Phi}(\cdot) |_{\infty} \leq C(2-t).
\end{align}
According to the terminal condition of $u^M$, the inequality \eqref{eq:lineargrowth} holds for $t=1$. Assume it is true for $t <1$, we prove for $t-\frac{1}{M}$. Due to the dynamical programming equations and our induction, we get that 
\begin{align*}
|u^M(t- 1/M,x) - \tilde{\Phi}(x) | =& \left|\min_{\phi} \max_{\a} \E^{\phi,\a} \left[u^M(t,x+\Delta X / \sqrt{M} )\right] -\tilde{\Phi}(x) \right| \\
\leq &  \left|\min_{\phi} \max_{\a} \E^{\phi,\a} \left[u^M(t,x+\Delta X / \sqrt{M} )- \tilde{\Phi}(x+\Delta X/\sqrt{M}) \right. \right. \\
& \quad \quad \quad \quad \quad \quad \quad  + \left. \left. \tilde{\Phi}(x+\Delta X/\sqrt{M}) - \tilde{\Phi}(x) \right] \right| \\
\leq & C(2-t) +  \left | \min_{\phi} \max_{\a} \E^{\phi,\a} \left[\tilde{\Phi}(x+\Delta X/\sqrt{M})-\tilde{\Phi}(x) \right] \right|
\end{align*}
Applying Taylor expansion to $\tilde{\Phi}(x)$, we obtain that 
\begin{align*}
\min_{\phi} \max_{\a} \E^{\phi,\a} \left[ \tilde{\Phi}(x+\Delta X/ \sqrt{M})- \tilde{\Phi}(x) \right] =& \frac{1}{\sqrt{M}} \min_{\phi}\max_{\a} \E^{\phi,\a}\left[\nabla \tilde{\Phi}(x) \cdot \Delta X \right]+ O(1/M) \\
=& \frac{1}{\sqrt{M}} \min_{\phi} \max_{\a} \sum_{i=1}^N [\pa_i \tilde{\Phi}- \phi^i ] \E^{\a}[\Delta G_i] +O(1/M). 
\end{align*}
Choosing $\phi_i=\pa_i \tilde{\Phi}$, and $\a \in \mathcal{A}_B$, it can be easily checked that the minimax is zero. Since the second derivative of $\tilde{\Phi}$ is upper bounded, there exists a constant $C>0$ such that $$ \min_{\phi} \max_{\a} \E^{\phi,\a} \left[ \tilde{\Phi}(x+\Delta X/ \sqrt{M})- \tilde{\Phi}(x) \right] \leq C/M,$$ 
and hence 
\begin{align*}
|u^M(t- 1/M,x) - \tilde{\Phi}(x) | \leq C(2-(t-1/M)).
\end{align*}
\end{proof}



\vskip 0.2in
\bibliography{ref}

\begin{thebibliography}{30}
\providecommand{\natexlab}[1]{#1}
\providecommand{\url}[1]{\texttt{#1}}
\expandafter\ifx\csname urlstyle\endcsname\relax
  \providecommand{\doi}[1]{doi: #1}\else
  \providecommand{\doi}{doi: \begingroup \urlstyle{rm}\Url}\fi

\bibitem[Abbasi~Yadkori et~al.(2017)Abbasi~Yadkori, Bartlett, and
  Gabillon]{NIPS2017_6896}
Yasin Abbasi~Yadkori, Peter~L Bartlett, and Victor Gabillon.
\newblock Near minimax optimal players for the finite-time 3-expert prediction
  problem.
\newblock In \emph{Advances in Neural Information Processing Systems 30}, pages
  3033--3042. Curran Associates, Inc., 2017.

\bibitem[Abernethy and Warmuth(2010)]{NIPS2010_0a0a0c8a}
Jacob~D Abernethy and Manfred K.~K Warmuth.
\newblock Repeated games against budgeted adversaries.
\newblock In \emph{Advances in Neural Information Processing Systems},
  volume~23, pages 1--9, 2010.

\bibitem[Abernethy et~al.(2009)Abernethy, Agarwal, Bartlett, and
  Rakhlin]{DBLP:conf/colt/AbernethyABR09}
Jacob~D. Abernethy, Alekh Agarwal, Peter~L. Bartlett, and Alexander Rakhlin.
\newblock A stochastic view of optimal regret through minimax duality.
\newblock In \emph{{COLT} 2009 - The 22nd Conference on Learning Theory,
  Montreal, Quebec, Canada, June 18-21, 2009}, 2009.

\bibitem[{Amir} et~al.(2020){Amir}, {Attias}, {Koren}, {Livni}, and
  {Mansour}]{2020arXiv200210286A}
Idan {Amir}, Idan {Attias}, Tomer {Koren}, Roi {Livni}, and Yishay {Mansour}.
\newblock {Prediction with Corrupted Expert Advice}.
\newblock \emph{arXiv:2002.10286}, 2020.

\bibitem[Barles and Souganidis(1991)]{MR1115933}
G.~Barles and P.~E. Souganidis.
\newblock Convergence of approximation schemes for fully nonlinear second order
  equations.
\newblock \emph{Asymptotic Anal.}, 4\penalty0 (3):\penalty0 271--283, 1991.

\bibitem[Barles et~al.(1993)Barles, Soner, and Souganidis]{MR1205984}
G.~Barles, H.~M. Soner, and P.~E. Souganidis.
\newblock Front propagation and phase field theory.
\newblock \emph{SIAM J. Control Optim.}, 31\penalty0 (2):\penalty0 439--469,
  1993.

\bibitem[Bayraktar et~al.(2020{\natexlab{a}})Bayraktar, Ekren, and
  Zhang]{MR4120922}
Erhan Bayraktar, Ibrahim Ekren, and Xin Zhang.
\newblock Finite-time 4-expert prediction problem.
\newblock \emph{Comm. Partial Differential Equations}, 45\penalty0
  (7):\penalty0 714--757, 2020{\natexlab{a}}.

\bibitem[Bayraktar et~al.(2020{\natexlab{b}})Bayraktar, Ekren, and
  Zhang]{2019arXiv190202368B}
Erhan Bayraktar, Ibrahim Ekren, and Yili Zhang.
\newblock {On the asymptotic optimality of the comb strategy for prediction
  with expert advice}.
\newblock \emph{Ann. Appl. Probab.}, 30\penalty0 (6):\penalty0 2517--2546,
  2020{\natexlab{b}}.

\bibitem[{Bayraktar} et~al.(2021){Bayraktar}, {Poor}, and
  {Zhang}]{2020arXiv200308457B}
Erhan {Bayraktar}, H.~Vincent {Poor}, and Xin {Zhang}.
\newblock {Malicious Experts versus the multiplicative weights algorithm in
  online prediction}.
\newblock \emph{IEEE Trans. Inform. Theory}, 67\penalty0 (1):\penalty0
  559--565, 2021.

\bibitem[{Calder} and {Drenska}(2020)]{2020arXiv200813703C}
Jeff {Calder} and Nadejda {Drenska}.
\newblock {Asymptotically optimal strategies for online prediction with
  history-dependent experts}.
\newblock \emph{To appear in Journal of Fourier Analysis and Applications
  Special Issue on Harmonic Analysis on Graphs}, 2020.

\bibitem[Cesa-Bianchi and Lugosi(2006)]{cesa2006prediction}
Nicolo Cesa-Bianchi and G{\'a}bor Lugosi.
\newblock \emph{Prediction, learning, and games}.
\newblock Cambridge university press, 2006.

\bibitem[Cesa-Bianchi et~al.(1997)Cesa-Bianchi, Freund, Haussler, Helmbold,
  Schapire, and Warmuth]{cesa1997use}
Nicolo Cesa-Bianchi, Yoav Freund, David Haussler, David~P Helmbold, Robert~E
  Schapire, and Manfred~K Warmuth.
\newblock How to use expert advice.
\newblock \emph{Journal of the ACM (JACM)}, 44\penalty0 (3):\penalty0 427--485,
  1997.

\bibitem[Chen et~al.(1991)Chen, Giga, and Goto]{MR1100211}
Yun~Gang Chen, Yoshikazu Giga, and Shun'ichi Goto.
\newblock Uniqueness and existence of viscosity solutions of generalized mean
  curvature flow equations.
\newblock \emph{J. Differential Geom.}, 33\penalty0 (3):\penalty0 749--786,
  1991.

\bibitem[{Drenska} and {Calder}(2020)]{2020arXiv200800052D}
Nadejda {Drenska} and Jeff {Calder}.
\newblock {Online Prediction With History-Dependent Experts: The General Case}.
\newblock \emph{arXiv:2008.00052}, 2020.

\bibitem[Drenska and Kohn(2020{\natexlab{a}})]{2020arXiv200712732D}
Nadejda Drenska and Robert~V. Kohn.
\newblock {A PDE Approach to the Prediction of a Binary Sequence with Advice
  from Two History-Dependent Experts}.
\newblock \emph{arXiv:2007.12732}, 2020{\natexlab{a}}.

\bibitem[Drenska and Kohn(2020{\natexlab{b}})]{MR4053484}
Nadejda Drenska and Robert~V. Kohn.
\newblock Prediction with expert advice: a {PDE} perspective.
\newblock \emph{J. Nonlinear Sci.}, 30\penalty0 (1):\penalty0 137--173,
  2020{\natexlab{b}}.

\bibitem[Foster and Vohra(1999)]{foster1999regret}
Dean~P Foster and Rakesh Vohra.
\newblock Regret in the on-line decision problem.
\newblock \emph{Games and Economic Behavior}, 29\penalty0 (1-2):\penalty0
  7--35, 1999.

\bibitem[Giga et~al.(1991)Giga, Goto, Ishii, and Sato]{MR1119185}
Y.~Giga, S.~Goto, H.~Ishii, and M.-H. Sato.
\newblock Comparison principle and convexity preserving properties for singular
  degenerate parabolic equations on unbounded domains.
\newblock \emph{Indiana Univ. Math. J.}, 40\penalty0 (2):\penalty0 443--470,
  1991.

\bibitem[Gravin et~al.(2016)Gravin, Peres, and Sivan]{10.5555/2884435.2884474}
Nick Gravin, Yuval Peres, and Balasubramanian Sivan.
\newblock Towards optimal algorithms for prediction with expert advice.
\newblock In \emph{Proceedings of the Twenty-Seventh Annual ACM-SIAM Symposium
  on Discrete Algorithms}, pages 528--547, 2016.

\bibitem[Gupta et~al.(2019)Gupta, Koren, and Talwar]{pmlr-v99-gupta19a}
Anupam Gupta, Tomer Koren, and Kunal Talwar.
\newblock Better algorithms for stochastic bandits with adversarial
  corruptions.
\newblock In \emph{Proceedings of Machine Learning Research}, volume~99, pages
  1562--1578, 2019.

\bibitem[Hart and Mas-Colell(2001)]{hart2001general}
Sergiu Hart and Andreu Mas-Colell.
\newblock A general class of adaptive strategies.
\newblock \emph{Journal of Economic Theory}, 98\penalty0 (1):\penalty0 26--54,
  2001.

\bibitem[Kapoor et~al.(2019)Kapoor, Kshitij~Patel, and Kar]{MR3918712}
Sayash Kapoor, Kumar Kshitij~Patel, and Purushottam Kar.
\newblock Corruption-tolerant bandit learning.
\newblock \emph{Mach. Learn.}, 108\penalty0 (4):\penalty0 687--715, 2019.

\bibitem[Kobzar et~al.(2020{\natexlab{a}})Kobzar, Kohn, and
  Wang]{kobzar2020new}
Vladimir~A Kobzar, Robert~V Kohn, and Zhilei Wang.
\newblock New potential-based bounds for the geometric-stopping version of
  prediction with expert advice.
\newblock In \emph{Mathematical and Scientific Machine Learning}, pages
  537--554, 2020{\natexlab{a}}.

\bibitem[Kobzar et~al.(2020{\natexlab{b}})Kobzar, Kohn, and
  Wang]{pmlr-v125-kobzar20a}
Vladimir~A. Kobzar, Robert~V. Kohn, and Zhilei Wang.
\newblock New potential-based bounds for prediction with expert advice.
\newblock In \emph{Proceedings of Machine Learning Research}, volume 125, pages
  2370--2405, 2020{\natexlab{b}}.

\bibitem[Kohn and Serfaty(2010)]{doi:10.1002/cpa.20336}
Robert~V. Kohn and Sylvia Serfaty.
\newblock A deterministic-control-based approach to fully nonlinear parabolic
  and elliptic equations.
\newblock \emph{Communications on Pure and Applied Mathematics}, 63\penalty0
  (10):\penalty0 1298--1350, 2010.

\bibitem[Luo and Schapire(2014)]{10.5555/3044805.3044832}
Haipeng Luo and Robert~E. Schapire.
\newblock Towards minimax online learning with unknown time horizon.
\newblock In \emph{Proceedings of the 31st International Conference on
  International Conference on Machine Learning - Volume 32}, pages
  I--226--I--234, 2014.

\bibitem[Lykouris et~al.(2018)Lykouris, Mirrokni, and
  Paes~Leme]{10.1145/3188745.3188918}
Thodoris Lykouris, Vahab Mirrokni, and Renato Paes~Leme.
\newblock Stochastic bandits robust to adversarial corruptions.
\newblock In \emph{Proceedings of the 50th Annual ACM SIGACT Symposium on
  Theory of Computing}, pages 114--122, 2018.

\bibitem[Rakhlin et~al.(2012)Rakhlin, Shamir, and Sridharan]{NIPS2012_4638}
Sasha Rakhlin, Ohad Shamir, and Karthik Sridharan.
\newblock Relax and randomize : From value to algorithms.
\newblock In \emph{Advances in Neural Information Processing Systems 25}, pages
  2141--2149. 2012.

\bibitem[Soner and Touzi(2002)]{MR1924400}
H.~Mete Soner and Nizar Touzi.
\newblock Dynamic programming for stochastic target problems and geometric
  flows.
\newblock \emph{J. Eur. Math. Soc. (JEMS)}, 4\penalty0 (3):\penalty0 201--236,
  2002.

\bibitem[Soner and Touzi(2003)]{MR1988466}
H.~Mete Soner and Nizar Touzi.
\newblock A stochastic representation for mean curvature type geometric flows.
\newblock \emph{Ann. Probab.}, 31\penalty0 (3):\penalty0 1145--1165, 2003.

\end{thebibliography}

\end{document}